\documentclass{article}

\usepackage{microtype}
\usepackage{graphicx}
\usepackage{subcaption}
\usepackage{booktabs} %

\usepackage{hyperref}

\usepackage[preprint]{icml2026}

\usepackage{amsmath}
\usepackage{amssymb}
\usepackage{mathtools}
\usepackage{amsthm}
\usepackage{xspace}
\usepackage{multirow}
\usepackage{url}

\usepackage{booktabs}
\usepackage{graphicx}
\usepackage{colortbl}
\usepackage{xcolor}

\usepackage{wrapfig}
\usepackage{subcaption}
\usepackage{enumitem}
\newcommand{\clipb}{$\text{Clip}_{\mathcal{B}}$\xspace}
\newcommand{\clipv}{$\text{Clip}_{\mathcal{V}}$\xspace}
\newcommand{\E}{\mathbb{E}}
\newcommand{\Var}{\mathbf{Var}}
\DeclareMathOperator{\sign}{sign}
\usepackage[capitalize,noabbrev]{cleveref}

\theoremstyle{plain}
\newtheorem{theorem}{Theorem}[section]

\newtheorem{lemma}[theorem]{Lemma}
\newtheorem{corollary}[theorem]{Corollary}
\newtheorem{algo}[theorem]{Algorithm}
\theoremstyle{definition}

\theoremstyle{remark}

\usepackage[textsize=tiny]{todonotes}
\ifodd 1

\else

\fi
\icmltitlerunning{On the Entropy Dynamics in Reinforcement Fine-Tuning of Large Language Models}

\begin{document}

\twocolumn[
  \icmltitle{On the Entropy Dynamics in Reinforcement Fine-Tuning of \\ Large Language Models}

  \icmlsetsymbol{equal}{*}
    \icmlsetsymbol{dag}{\dag}

  \begin{icmlauthorlist}
    \icmlauthor{Shumin Wang}{xxx,dag}
    \icmlauthor{Yuexiang Xie}{yyy}
    \icmlauthor{Wenhao Zhang}{yyy}
    \icmlauthor{Yuchang Sun}{yyy}
    \icmlauthor{Yanxi Chen}{yyy}
    \icmlauthor{Yaliang Li}{yyy}
    \icmlauthor{Yanyong Zhang}{xxx}
  \end{icmlauthorlist}

  \icmlaffiliation{xxx}{University of Science and Technology of China}
  \icmlaffiliation{yyy}{Tongyi Lab, Alibaba Group}

  \icmlcorrespondingauthor{Yanyong Zhang}{yanyongz@ustc.edu.cn}
  \icmlcorrespondingauthor{Yaliang Li}{yaliang.li@alibaba-inc.com}
  \icmlkeywords{Machine Learning, ICML}
  \vskip 0.3in
]

\printAffiliationsAndNotice{\noindent\dag Work done as an intern at Tongyi Lab, Alibaba Group.}  %

\begin{abstract}
Entropy serves as a critical metric for measuring the diversity of outputs generated by large language models (LLMs), providing valuable insights into their exploration capabilities.
While recent studies increasingly focus on monitoring and adjusting entropy to better balance exploration and exploitation in reinforcement fine-tuning (RFT), a principled understanding of entropy dynamics during this process is yet to be thoroughly investigated.
In this paper, we establish a theoretical framework for analyzing the entropy dynamics during the RFT process, which begins with a discriminant expression that quantifies entropy change under a single logit update. This foundation enables the derivation of a first-order expression for entropy change, which can be further extended to the update formula of Group Relative Policy Optimization (GRPO).
The corollaries and insights drawn from the theoretical analysis inspire the design of entropy control methods, and also offer a unified lens for interpreting various entropy-based methods in existing studies.
We provide empirical evidence to support the main conclusions of our analysis and demonstrate the effectiveness of the derived entropy-discriminator clipping methods.
This study yields novel insights into RFT training dynamics, providing theoretical support and practical strategies for optimizing the exploration-exploitation balance during LLM fine-tuning.
\end{abstract}

\section{Introduction}
Reinforcement fine-tuning (RFT)~\cite{openai_rft_guide} has recently attracted growing attention as a post-training paradigm for enhancing the capabilities of large language models (LLMs)~\cite{guo2025deepseek,yang2025qwen3,agarwal2025gpt}.
It has shown substantial improvements across a range of downstream tasks, such as mathematical reasoning~\cite{shao2024deepseekmathpushinglimitsmathematical,chen2025acereason}, programming~\cite{swerl, zeng2025glm}, and tool usage~\cite{tool-n1, retool}. 

Drawing from reinforcement learning (RL)~\cite{sutton1998reinforcement}, RFT transforms the fine-tuning process into a policy optimization problem where LLMs are incentivized to produce high-reward responses.
The exploration-exploitation trade-off presents a crucial challenge for RFT, potentially leading to unstable performance and stagnation in local optima~\cite{arulkumaran2017deep,ahmed2019understanding}. 
In this context, the {\it entropy} of responses emerges as a key diagnostic metric, offering insights into the output diversity of LLMs, and is actively leveraged by recent studies~\cite{yu2025dapo,cui2025entropy,open-Reasoner,su2025cegppo} to monitor training dynamics and regulate policy behavior.

However, existing methods~\cite{wang2025beyond,liao2025enhancing,yu2025dapo,he2025rewarding} often rely on heuristic designs that treat entropy in isolation and oversimplify its adjustment. Moreover, the divergence in whether these approaches encourage or suppress entropy highlights a fundamental lack of in-depth understanding of entropy dynamics~\cite{open-Reasoner,deepcoder2025,Polaris2025}. Such an unprincipled basis can lead to labor-intensive hyperparameter tuning without clear guidance, thus hindering the effective optimization of RFT. As a result, a theoretically grounded framework is increasingly necessary to characterize entropy dynamics in RFT.

To fill this gap, we establish a theoretical framework that provides a principled understanding of entropy dynamics in RFT.
Inspired by~\cite{ren2024learning}, we model the update of a single token’s logit during optimization, and characterize how it propagates through the model’s output probability distribution, ultimately influencing the policy’s entropy.
Our derivation reveals that the entropy change direction is determined by the interplay between the update direction (whether the token is rewarded or penalized) and the sign of the proposed discriminator score $S_{*}$, which captures the relationship between token probability and policy entropy. 
This analysis explains the widely observed phenomenon of rapid entropy collapse~\cite{yu2025dapo} when models are consistently rewarded for generating high-probability and ``safe'' responses~\cite{su2025cegppo}.

Building upon such single-logit analysis, we extend our framework to analyze the entropy change resulting from an optimization step under Group Relative Policy Optimization (GRPO)~\cite{shao2024deepseekmathpushinglimitsmathematical}.
We derive an expression that practically computes the entropy change trend leveraging the discriminant and its policy-weighted expectation.
Our analysis provides insights for the development of entropy-based methods, inspires practical clipping strategies and sheds light on the mechanisms of existing approaches.

Our contributions can be summarized as follows:
\begin{itemize}%
    \item We propose a theoretical framework that characterizes the token-level entropy change during policy optimization. We further extend it to a practical GRPO optimization step and derive a first-order analytical expression, indicating that the direction of entropy change is closely related to the direction of token updates and a discriminator score $S_{*}$.

    \item Our theoretical analysis provides new insights for the design of entropy control methods. Building upon this, we explain existing entropy control methods from the perspective of entropy dynamics, offering a unified and principled theoretical framework for understanding their effects and underlying mechanics.
    
    \item We conduct experiments to provide empirical evidence for our theoretical analysis, showing that $S_{*}$ can be a reliable discriminator for the entropy dynamics. The experimental results also demonstrate the effectiveness of the derived clipping methods in stabilizing the entropy in RFT to promote model exploration.
\end{itemize}

\section{Preliminaries}
\label{sec:preliminary}
\textbf{Group Relative Policy Optimization (GRPO)} \quad
GRPO~\cite{shao2024deepseekmathpushinglimitsmathematical} is a prominent RFT algorithm that has proven highly effective and efficient across various tasks~\cite{guo2025deepseek, tool-n1}.
In GRPO, for each query $q$, a behavior policy $\pi_{\theta_{\text{sample}}}$ is employed to sample a group of $G$ responses $\{o_i\}_{i=1}^G$, where each response $o_i=(a_{i,1}, \dots, a_{i,T_i})$ with $T_i$ tokens is subsequently assigned a scalar reward $R_i$, and each token $a_{i,t}$ in response is generated under state $s_{i,t} = (q, o_{i,<t})$.
The policy is updated by maximizing the GRPO objective function, defined as:
\begin{multline}
\label{eq:grpo}
\mathcal{J}_{\text{group}}(\theta)
= \frac{1}{\sum_{j=1}^G T_j}
\sum_{i=1}^G \sum_{t=1}^{T_i} \\
\min\!\Big(
  r_{i,t}(\theta)A_{i},\,
  \operatorname{clip}\!\big(r_{i,t}(\theta), 1-\varepsilon_l, 1+\varepsilon_h\big)A_{i}
\Big).
\end{multline}
Following~\cite{yu2025dapo,Polaris2025,wang2025beyond}, we omit the KL divergence penalty. Here, the advantage $A_i$ is computed by standardizing rewards within the group, and the importance ratio $r_{i,t}(\theta)$ is the token-level probability ratio between the target and behavior policies, i.e., $ A_{i} = \frac{R_i - \operatorname{mean}(\{R_j\}_{j=1}^G)}{\operatorname{std}(\{R_j\}_{j=1}^G)}$ and $r_{i,t}(\theta) = \frac{\pi_\theta(a_{i,t}\mid s_{i,t})}{\pi_{\theta_{\text{sample}}}(a_{i,t}\mid s_{i,t})}$.
The parameter $\varepsilon$ defines the clipping range for PPO-style~\cite{schulman2017proximalpolicyoptimizationalgorithms} clipped objective. In our ``strict on-policy training''~\cite{chen2025acereason} setup, where the behavior policy is the optimized policy ($\pi_{\theta_{\text{sample}}} = \pi_\theta$), the importance ratio satisfies $r_{i,t} = 1$ and the clipping mechanism remains inactive. In this case, the update of GRPO encourages increasing the probability of sampled tokens if $A_{i}>0$ and decreasing it if $A_{i}<0$. 

\textbf{Entropy Dynamics} \quad
Entropy provides a principled measure of uncertainty and is used to quantify the diversity of model outputs. For an LLM, the next-token distribution is given by $\mathbf{p}_{t}(\cdot)=\pi_\theta(\cdot\mid s_{t})=\mathrm{softmax}(\mathbf{z}_t)$, where $\mathbf{z}_t$ are the model's logits at position $t$ in a response.
The token-level entropy is then defined as
$H_t = -\sum_{i\in [V]} p_i^t \log p_i^t$,
where $V$ denotes the size of vocabulary $\mathcal{V}$ and $p_i^t$ is the probability of token $a_t$ being the $i$-th vocabulary item. 

The field of \emph{learning dynamics}~\cite{ren2024learning} studies how parameter updates affect model predictions. In this work, we introduce the concept of \emph{Entropy Dynamics}, focusing specifically on how token entropy evolves during RFT. Specifically, we investigate how a parameter update, triggered by a single sampled token $a_t$, alters the entropy of the output token distribution at that step.

We formalize this by investigating the relationship between the entropy change before and after update, $\Delta H_t$, and the distribution of the policy in position t, $\pi_\theta(a_t \mid s_t)$. By analyzing this relationship, we aim to uncover the principles that determine whether the updates in RFT encourage diverse responses or lead to repetitive, similar outputs.

\section{Analysis of the Entropy Dynamics in RFT}
\label{sec:theory}
To establish a principled understanding of entropy dynamics in RFT, we propose a theoretical framework that characterizes the token-level entropy change during policy optimization. 
Specifically, we quantify how the update of a single token affects the policy's entropy, providing a microscopic view of entropy dynamics. Upon this, we derive the first-order expression for the entropy change resulting from a policy update step when applying GRPO.

\subsection{From a Single Logit Update to the Entropy Change}
\label{subsec:single-logit}
We consider a single decoding step where the policy $\pi$ produces a distribution over a vocabulary $\mathcal{V}$ of size $V$. Let $\mathbf{z} \in \mathbb{R}^V$ be the model's output logits. These logits are transformed into a probability distribution $\mathbf{p}$ via the softmax function, where $p_i = \frac{\exp(z_i)}{\sum_{j=1}^V \exp(z_j)}, \forall i \in [V]$. The diversity of this probability distribution is measured by the token-level Shannon entropy~\cite{shannon1948mathematical}, which can be formally given as $H(\mathbf{p}) = -\sum_{i=1}^V p_i \log p_i$. 

Throughout our analysis, we make the following standard assumptions for deriving first-order dynamics: (i) All probabilities $\{p_i\}$ are non-zero, as guaranteed by the softmax function; (ii) Auxiliary regularization terms, including KL-divergence penalties, and explicit entropy bonuses, are considered inactive within RFT unless explicitly specified; and (iii) We ignore tokens that trigger logit clipping as their gradients are set to zero and contribute no change to entropy.

The analysis begins with a fundamental operation in model update, i.e., updating the logit of a single token. We model this as a perturbation as
$  \delta \mathbf{z} = \varepsilon \cdot \mathbf{e}_k$,
where $\mathbf{e}_k$ is the standard basis vector for the $k$-th token, and $\varepsilon$ is the change caused by the optimization process.
The sign of $\varepsilon$, i.e., $\sign(\varepsilon)$, represents the direction of the update: $\sign(\varepsilon)=+1$ corresponds to rewarding the token (increasing the logits), while $\sign(\varepsilon)=-1$ corresponds to penalizing it (decreasing the logits).
The following lemma quantifies how this logit perturbation propagates to the probability distribution.
\begin{lemma}
\label{lemma1}
Given a logit perturbation $\delta \mathbf{z} = \varepsilon \cdot \mathbf{e}_k$ on k-th token $a^k$ in the vocabulary, the resulting first-order change in the probability distribution $\mathbf{p}$ is given by:
\begin{equation}
\label{eq:delta-p}
\delta p_k \!=\! \varepsilon p_k(1 - p_k)
\ \text{and} \
\delta p_i =\! -\varepsilon p_i p_k, \forall i\in[V], i \neq k.
\end{equation}
\end{lemma}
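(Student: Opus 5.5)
The plan is to compute the Jacobian of the softmax map and contract it with the perturbation direction $\mathbf{e}_k$. The first-order change is $\delta p_i = \sum_j \frac{\partial p_i}{\partial z_j}\,\delta z_j$. Since $\delta \mathbf{z} = \varepsilon\,\mathbf{e}_k$, this sum collapses to the single term $\delta p_i = \varepsilon\,\frac{\partial p_i}{\partial z_k}$. So all that is needed is the column of the softmax Jacobian indexed by $k$.

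Write $p_i = e^{z_i}/Z$ with $Z=\sum_{j=1}^V e^{z_j}$, and note that $\partial Z/\partial z_k = e^{z_k}$. The quotient rule then gives
\begin{equation*}
\frac{\partial p_i}{\partial z_k} = \frac{\mathbb{1}[i=k]\,e^{z_i}}{Z} - \frac{e^{z_i}e^{z_k}}{Z^2} = p_i\bigl(\mathbb{1}[i=k] - p_k\bigr).
\end{equation*}
There are two cases.
\begin{itemize}
\item For $i=k$ this equals $p_k(1-p_k)$, so $\delta p_k = \varepsilon p_k(1-p_k)$.
\item For $i\neq k$ it equals $-p_ip_k$, so $\delta p_i = -\varepsilon p_i p_k$.
\end{itemize}
These are exactly the two formulas in \eqref{eq:delta-p}.

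As a sanity check, the changes should sum to zero, since $\sum_i p_i = 1$ is preserved. Indeed, $\varepsilon p_k(1-p_k) - \varepsilon p_k\sum_{i\neq k}p_i = \varepsilon p_k(1-p_k) - \varepsilon p_k(1-p_k) = 0$.

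The only real subtlety is the meaning of ``first-order''. The exact change is $p_i(\mathbf{z}+\varepsilon\mathbf{e}_k)-p_i(\mathbf{z})$. Softmax is smooth, and assumption (i) guarantees all $p_i>0$, so Taylor's theorem gives this difference as $\varepsilon\,\partial p_i/\partial z_k + O(\varepsilon^2)$. The lemma's expressions are this linear term, and I will state explicitly that the $O(\varepsilon^2)$ remainder is discarded.
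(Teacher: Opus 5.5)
Your proposal is correct and follows the paper's proof essentially step for step: the paper also writes $\delta p_i = \sum_j \frac{\partial p_i}{\partial z_j}\delta z_j + O(\varepsilon^2)$, uses the softmax Jacobian $p_i(\mathbf{1}\{i=j\}-p_j)$, and reduces to its $k$-th column. Your quotient-rule derivation of that Jacobian and your sum-to-zero check are reasonable additions; the paper simply quotes the Jacobian.
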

\begin{proof}
The Jacobian of the softmax function is $\frac{\partial p_i}{\partial z_j} = p_i\,(\mathbf{1}\{i=j\}-p_j)$, where $\mathbf{1}\{\cdot\}$ is the indicator function.
The first-order change $\delta p_i$ is given by the Taylor expansion $\delta p_i = \sum_{j=1}^V \frac{\partial p_i}{\partial z_j} \delta z_j + O(\varepsilon^2)$.
Since $\delta z_j = \varepsilon \cdot \mathbf{1}\{j=k\}, \forall j \in [V]$, we have $\delta p_i = \frac{\partial p_i}{\partial z_k} \varepsilon = \varepsilon \cdot p_i (\mathbf{1}\{i=k\} - p_k)$,
which yields the results in \eqref{eq:delta-p}.
\end{proof}

An immediate consequence of Lemma~\ref{lemma1} is that the relative change in probability is uniform for all unperturbed tokens.
Based on \eqref{eq:delta-p}, the changes in probabilities are given by:
\begin{equation}
  \frac{\delta p_k}{p_k} = \varepsilon (1-p_k) \ \text{and} \
  \frac{\delta p_i}{p_i} = - \varepsilon p_k, \ \forall i\in[V], i \neq k.
\end{equation}

The analysis shows that, when the probability of token $a^k$ is adjusted, its probability mass is redistributed proportionally from (or to) all other tokens. This aligns with the observation in previous works~\cite{ren2024learning}.

Building upon this insight, we can now derive a closed-form expression for the first-order change in entropy. We first define a key quantity that would determine the direction of this change.
Let the entropy change discriminator for token on position $t$ be defined as $S^t_i \triangleq p_i^t(H^t + \log p_i^t)$, where the subscript $t$ is omitted when not causing confusion.
In particular, assuming token $a^k$ is chosen at this position, the corresponding discriminator is denoted as $S_*^t\triangleq S_k^t$.

\begin{theorem}%
\label{theorem1}

The first-order change in entropy, denoted by $\Delta H$, under the perturbation $\delta \mathbf{z} =\varepsilon \mathbf{e}_k$ is given by:
\begin{equation}
\label{eq:delta-H1}
\Delta H = - \varepsilon  S_* + O(\varepsilon^2).
\end{equation}
\end{theorem}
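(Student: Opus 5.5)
The plan is a first-order Taylor expansion of $H(\mathbf{p})$ composed with the softmax, with the probability changes supplied by Lemma~\ref{lemma1}. Since $H(\mathbf{p}) = -\sum_i p_i \log p_i$, its partial derivatives are $\partial H/\partial p_i = -(\log p_i + 1)$. These are well defined because every $p_i>0$ by assumption (i). To first order,
\begin{equation*}
\Delta H = -\sum_{i=1}^V (\log p_i + 1)\,\delta p_i + O(\varepsilon^2).
\end{equation*}
The constant term drops out: $\sum_i \delta p_i = \varepsilon p_k(1-p_k) - \varepsilon p_k\sum_{i\neq k} p_i = \varepsilon p_k(1-p_k) - \varepsilon p_k(1-p_k) = 0$. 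This reflects that the softmax keeps the distribution normalized. Hence $\Delta H = -\sum_i \log p_i\,\delta p_i + O(\varepsilon^2)$.

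Next, substitute \eqref{eq:delta-p}. A convenient form is $\delta p_i = \varepsilon p_i(\mathbf{1}\{i=k\} - p_k)$ for all $i$, which gives
\begin{equation*}
\sum_i \log p_i\,\delta p_i = \varepsilon\Big(p_k\log p_k - p_k\sum_i p_i\log p_i\Big) = \varepsilon\big(p_k\log p_k + p_k H\big) = \varepsilon S_k.
\end{equation*}
Therefore $\Delta H = -\varepsilon S_* + O(\varepsilon^2)$, since $S_* = S_k = p_k(H+\log p_k)$.

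The only point needing care is justifying the remainder. The map $\varepsilon \mapsto H(\mathrm{softmax}(\mathbf{z}+\varepsilon\mathbf{e}_k))$ is smooth, in fact real-analytic, because softmax is smooth with strictly positive outputs and $H$ is smooth on the open simplex. Taylor's theorem then gives the $O(\varepsilon^2)$ term directly. This also addresses a subtlety in Lemma~\ref{lemma1}: the $\delta p_i$ there are only first-order, so composing them with $H$ could seem to leave uncontrolled error. To avoid that, I would compute the derivative $\frac{d}{d\varepsilon}H$ at $\varepsilon=0$ by the chain rule, which uses exactly the softmax Jacobian from the lemma. Any $O(\varepsilon^2)$ correction to the $\delta p_i$, multiplied by the bounded gradient of $H$, only contributes at second order.
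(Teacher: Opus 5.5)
Your proposal is correct and follows the paper's proof essentially step for step. Both do a first-order Taylor expansion of $H$ with $\partial H/\partial p_i = -(1+\log p_i)$, drop the constant term via $\sum_i \delta p_i = 0$, and substitute Lemma~\ref{lemma1} to obtain $-\varepsilon p_k(H+\log p_k)$; your chain-rule justification of the $O(\varepsilon^2)$ remainder is a small rigor improvement, since the paper passes from $O(\|\delta\mathbf{p}\|^2)$ to $O(\varepsilon^2)$ without comment.
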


\begin{proof}
The first-order Taylor expansion of entropy $H$ around $\mathbf{p}$ can be given by:
\begin{equation}
\Delta H \!=\! H(\mathbf p+\delta\mathbf p)-H(\mathbf p)\!=\!\sum_{i=1}^{V} \frac{\partial H}{\partial p_i} \delta p_i + O(\|\delta\mathbf{p}\|^2).
\label{eq:delta_entropy_expansion}
\end{equation}
Since $ \frac{\partial H}{\partial p_i} = -(1+\log p_i)$ and conservation of probability implies $\sum_i \delta p_i = 0$, we have:
\begin{align}
\Delta H
&= -\sum\nolimits_{i=1}^{V}(1+\log p_i)\delta p_i + O(\varepsilon^2)\\
&= -\sum\nolimits_{i=1}^{V}\log p_i\delta p_i + O(\varepsilon^2).
\label{eq:delta-H1-intermediate}
\end{align}
Substituting the expressions for $\delta p_i$ from Lemma~\ref{lemma1}, $\Delta H$ can be simplified as:
\begin{align*}
\Delta H
&\!\!=\!\! -\varepsilon p_k\big((1-p_k)\log p_k - \sum\nolimits_{i\neq k} p_i\log p_i\big) + O(\varepsilon^2)\\
&= -\varepsilon\,p_k\big(H+\log p_k\big) + O(\varepsilon^2),
\end{align*}
which completes the proof.
\end{proof}
\textbf{Implications} \quad
Theorem~\ref{theorem1} provides a simple yet effective criterion for determining how a single-token update affects policy entropy. The direction of entropy change is dictated by the sign of two factors: the update direction $\varepsilon$ and the discriminator $S_*$. 
The sign of the discriminator $S_*$ depends on the relationship between the token's probability $p_k$ and the overall entropy $H(\mathbf{p})$:
\begin{equation*}
\sign(S_*) = \sign \left(H(\mathbf{p}) + \log p_k\right) 
= \sign (p_k - e^{-H(\mathbf{p})}).
\end{equation*}
Consequently, rewarding a token ($\sign(\varepsilon)=+1$) increases entropy if its probability $p_k < e^{-H(\mathbf{p})}$ (a relatively low-probability token) and decreases entropy if $p_k > e^{-H(\mathbf{p})}$ (a relatively high-probability token). The relationship is reversed when a token is penalized ($\sign(\varepsilon)=-1$).

This microscopic analysis is the foundational building block for understanding entropy dynamics in RFT. Given that most existing RFT algorithms~\cite{shao2024deepseekmathpushinglimitsmathematical,yu2025dapo,zheng2025gspo} apply an update signal of the same direction to all tokens within a single response, our analysis explains the common empirical observation of rapid entropy collapse when models are consistently rewarded for generating high-probability and ``safe'' responses~\cite{he2025rewarding}, which can lead to a gradual loss of the model's exploratory capabilities.

\subsection{Extension to a GRPO Optimization Step}
Beyond the above single-logit analysis, we extend our framework to model the entropy change resulting from a GRPO optimization step introduced in Section~\ref{sec:preliminary}.
Recall the training objective function of GRPO in equation~\ref{eq:grpo}, for a chosen token $a^k$ with token id $k$, its contribution to the whole training target can be given as:
$
\frac{\mathbf{p}_k}{p'_{k}}\cdot A\,,
$
where $\mathbf{p}_k$ denotes the current model distribution in the sampled position, $p'_k$ is the probability of the sampled token under the sampling model distribution and $A$ represents its advantage. Therefore, its contribution to the training loss is given by a surrogate loss:
\begin{equation}
    \mathcal{L}(\mathbf{z}) = r \cdot A \cdot \log p_k(\mathbf{z}),
    \label{eq:grpo_token_obj}
\end{equation}
where $r=\frac{\pi_\theta(a^k)}{\pi_{\theta_{\text{sample}}}(a^k)}$ is the importance sampling ratio.

A single gradient update step with learning rate $\eta$ results in a first-order change to the logits $\mathbf{z}$:
\begin{equation}
    \delta\mathbf z=\eta\,\nabla_{\mathbf z} L = \alpha\,\nabla_{\mathbf z}\log p_k\,,
    \label{eq:grpo_update}
\end{equation}
where we define $\alpha= \eta r A$ as the effective step size.

Recall the Jacobian of the softmax function in Lemma~\ref{lemma1}, $\nabla_{\mathbf{z}} p_k = p_k(\mathbf{e}_k - \mathbf{p})$. Therefore, we have:
\begin{equation}
    \delta\mathbf z = \alpha\,\nabla_{\mathbf z}\log p_k = \alpha \frac{1}{p_k} \nabla_{\mathbf z} p_k = \alpha (\mathbf e_k-\mathbf p). \label{eq:grpo_gradient}
\end{equation}

\begin{theorem}\label{theorem2}
Let $S_i$ be the entropy discriminant for token $i$, and let its expectation over the policy distribution be $\mathbb{E}_{i \sim \mathbf{p}}[S_i] = \sum_{i=1}^V p_i S_i$. 
The first-order change in entropy of a token $H(\mathbf{p})$ satisfies:
\begin{equation}
\label{eq:delta-H2}
  \Delta H = -\alpha \left( S_* - \mathbb{E}_{i \sim \mathbf{p}}[S_i] \right) + O(\alpha^2).
\end{equation}
\end{theorem}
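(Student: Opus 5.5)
The plan is to use linearity, decomposing the full logit perturbation $\delta\mathbf z=\alpha(\mathbf e_k-\mathbf p)$ from \eqref{eq:grpo_gradient} into a superposition of single-logit perturbations and applying Theorem~\ref{theorem1} to each. Write $\delta\mathbf z=\sum_{j=1}^V \varepsilon_j \mathbf e_j$ with $\varepsilon_k=\alpha(1-p_k)$ and $\varepsilon_j=-\alpha p_j$ for $j\neq k$. Since $H\circ\mathrm{softmax}$ is smooth in $\mathbf z$, its first-order change is the directional derivative $\nabla_{\mathbf z}H\cdot\delta\mathbf z$. This is linear in $\delta\mathbf z$, so the first-order contributions of the coordinate perturbations simply add, and the remainder is $O(\|\delta\mathbf z\|^2)=O(\alpha^2)$ because $\|\mathbf e_k-\mathbf p\|$ is bounded.

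Next, I read off the gradient from Theorem~\ref{theorem1}. For a perturbation $\varepsilon\mathbf e_j$ it gives $\Delta H=-\varepsilon S_j+O(\varepsilon^2)$, and nothing in that proof used that $j$ was the sampled token, so $\partial H/\partial z_j=-S_j=-p_j(H+\log p_j)$ for every $j$. Therefore
\begin{equation*}
\Delta H=-\sum_{j=1}^V \varepsilon_j S_j+O(\alpha^2)
=-\alpha\Big((1-p_k)S_k-\sum_{j\neq k}p_jS_j\Big)+O(\alpha^2)
=-\alpha\Big(S_k-\sum_{j=1}^V p_jS_j\Big)+O(\alpha^2),
\end{equation*}
and since $S_*=S_k$, this is \eqref{eq:delta-H2}.

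The key algebraic step is absorbing the $-p_kS_k$ term into the expectation, which yields the centered form $S_*-\mathbb E_{i\sim\mathbf p}[S_i]$.

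The main point requiring care is justifying that the error terms combine correctly. The single-logit results are local expansions, and summing $V$ of them requires the first-order map to be linear (it is, being a gradient), with a remainder controlled jointly rather than term by term. I would handle this by citing the multivariate Taylor theorem for the smooth function $\mathbf z\mapsto H(\mathrm{softmax}(\mathbf z))$ directly.

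As a sanity check, I would recompute the result independently by first finding $\delta p_i=\alpha p_i(\mathbf 1\{i=k\}-p_k-p_i+\sum_j p_j^2)$ and then substituting into \eqref{eq:delta-H1-intermediate}. Both routes should give the same expression. In particular, $\sum_j p_j S_j=\sum_j p_j^2(H+\log p_j)$, which is the quantity the direct calculation produces.
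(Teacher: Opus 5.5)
Your proof is correct, but it is organized differently from the paper's. The paper works in probability space. It computes $\delta\mathbf p=J\,\delta\mathbf z$ with $J=\mathrm{diag}(\mathbf p)-\mathbf p\mathbf p^\top$, which gives $\delta p_i=\alpha p_i(\mathbf 1\{i=k\}-p_i-p_k+\|\mathbf p\|_2^2)$. It then substitutes this into $\Delta H\approx-\sum_i(1+\log p_i)\,\delta p_i$, which is the computation you keep only as a sanity check. You work in logit space instead. You use Theorem~\ref{theorem1} as a lemma to read off $\nabla_{\mathbf z}H=-(S_1,\dots,S_V)$, then pair it with $\delta\mathbf z=\alpha(\mathbf e_k-\mathbf p)$. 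Since $J$ is symmetric, both routes evaluate the same quantity $\nabla_{\mathbf p}H^\top J\,\delta\mathbf z$, just associated differently.

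Your version buys several things:
\begin{itemize}
\item it makes Theorem~\ref{theorem2} a direct corollary of Theorem~\ref{theorem1};
\item it shows that the centering $S_*-\mathbb{E}_{i\sim\mathbf p}[S_i]$ comes entirely from the $-\mathbf p$ component of the GRPO logit step;
\item it gives $\Delta H\approx-\sum_j S_j\,\delta z_j$ for any logit update;
\item it controls the remainder with a single Taylor expansion of $\mathbf z\mapsto H(\mathrm{softmax}(\mathbf z))$, whereas the paper's two-stage linearization leaves the softmax error implicit.
\end{itemize}
The paper's route, in exchange, is self-contained. It also exhibits the explicit redistribution of probability mass under a GRPO step, in parallel with Lemma~\ref{lemma1}.
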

\begin{proof}
Recall the token-wise objective defined in~\eqref{eq:grpo_token_obj}
and a single update step defined in~\eqref{eq:grpo_update}.
Since ${\mathbf p} = \mathrm{softmax}({\mathbf z})$, its Jacobian matrix is $J = \frac{\partial {\mathbf p}}{\partial {\mathbf z}} =\mathrm{diag}(\mathbf p)-\mathbf p\mathbf p^\top$, yielding the following equations:
\begin{align*}
    \delta\mathbf p = J\,\delta\mathbf z
    &= \left(\mathrm{diag}(\mathbf p)-\mathbf p\mathbf p^\top\right) \alpha (\mathbf e_k-\mathbf p)\\
    &=\alpha\left[\mathbf p\odot(\mathbf e_k-\mathbf p)-\mathbf p\,(p_k-\|\mathbf p\|_2^2)\right],
\end{align*}
and 
\begin{equation*}
\delta p_i = \alpha \left[ p_i(\mathbf{1}\{i=k\} - p_i) - p_i(p_k - \|\mathbf{p}\|_2^2)\right].
\end{equation*}
As the first-order entropy change is given in~\eqref{eq:delta_entropy_expansion},
we substitute $\delta p_i$ and apply $\sum_i p_i\log p_i=-H$ to~\eqref{eq:delta_entropy_expansion}, which yields:
\begin{align*} 
\Delta H
\!\!&=\alpha\bigl[\sum\nolimits_{i}p_i^2(H \!+\!\log p_i)\!-\!p_k(H\!+\!\log p_k)\bigr] + O(\alpha^2)\\
&= -\alpha\left[S_*-\mathbb E_{i\sim {\mathbf p}}[S_i]\right] + O(\alpha^2).
\end{align*}
The proof is completed by applying the definition of $S_i$ and $\mathbb E_{i\sim {\mathbf p}}[S_i]$. 
\end{proof}

\textbf{Implications}\quad
Theorem~\ref{theorem2} reveals a crucial distinction from the single-logit case. With a GRPO optimization step, the entropy change is no longer governed by the absolute value of the entropy discriminant score $S_*$, but by its deviation from the policy-weighted expectation $\mathbb{E}_{i \sim \mathbf{p}}[S_i]$, which acts as a dynamic baseline. Entropy decreases if we reward (positive $\alpha$) a token $a^k$ whose score $S_*$ is above the baseline, and it increases when $S_*$ is below the baseline. The relationship can be reversed when we penalize (negative $\alpha$) a token. 
Since $\alpha=\eta rA$, with $\eta$ on the order of $10^{-6}$, $r$ clipped near $1$, and $A$ typically $O(1)$ due to within-group standardization, we have $|\alpha|\ll 1$. Therefore, the $O(\alpha^2)$ terms are negligible and the first-order approximation in Theorem~\ref{theorem2} is accurate in practice.

Moving a step forward, we provide two corollaries derived from Theorem~\ref{theorem2}.

\begin{corollary}\label{corollary2}
To a first-order approximation, with on-policy sampling, the expected entropy change factor $S_k - \mathbb{E}_{i\sim p}[S_i]$ of a token within GRPO optimization is zero, i.e.,
\begin{equation}
    \mathbb{E}_{k \sim \mathbf{p}}\big[S_* - \mathbb{E}_{i \sim \mathbf{p}}[S_i]\big] = 0.
\end{equation}
\end{corollary}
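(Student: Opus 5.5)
The plan is to compute the expectation directly, using linearity and the fact that the baseline does not depend on the sampled token. Under on-policy sampling, the sampled token id $k$ is drawn from the current policy distribution $\mathbf{p}$ at that position; this is the meaning of $\pi_{\theta_{\text{sample}}}=\pi_\theta$. The discriminator of the sampled token is $S_*=S_k$. The quantity $\mathbb{E}_{i\sim\mathbf{p}}[S_i]=\sum_{i=1}^V p_iS_i$ is a deterministic function of $\mathbf{p}$ alone, so it is a constant with respect to the randomness in $k$.

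By linearity of expectation,
\begin{equation*}
\mathbb{E}_{k\sim\mathbf{p}}\big[S_k-\mathbb{E}_{i\sim\mathbf{p}}[S_i]\big]
=\sum_{k=1}^V p_k S_k-\sum_{i=1}^V p_i S_i .
\end{equation*}
The two sums are identical: each is the $\mathbf{p}$-weighted average of the same vector $(S_1,\dots,S_V)$. Hence the expression vanishes. Combined with Theorem~\ref{theorem2}, this gives $\mathbb{E}_k[\Delta H]=-\alpha\cdot 0+O(\alpha^2)$, provided the effective step size $\alpha=\eta rA$ is the same across the candidate tokens being averaged.

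That proviso is the only delicate point, and it concerns interpretation rather than calculation. Under strict on-policy training, $r=1$. For the statement as written, the factor being averaged is $S_k-\mathbb{E}_{i\sim\mathbf{p}}[S_i]$ itself, with $\alpha$ excluded, so no assumption on $A$ is needed. If one wants the stronger claim about $\mathbb{E}_k[\Delta H]$, one must also assume that $A$ does not depend on which token $k$ was sampled at that position. This holds at first order for a fixed advantage sign, but not in general, since the reward depends on the whole response.

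I would state the corollary for the factor, as the paper does. I would then remark that the zero mean means the entropy change in GRPO is driven by the correlation between $A$ and $S_*-\mathbb{E}_{i\sim\mathbf{p}}[S_i]$ across sampled tokens, rather than by the discriminator alone.
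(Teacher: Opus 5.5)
Your proof is correct and is the same direct computation the paper uses: the baseline $\mathbb{E}_{i\sim\mathbf{p}}[S_i]$ does not depend on $k$, so the expectation reduces to $\sum_k p_k S_k - \sum_i p_i S_i = 0$. The paper just writes both sums out explicitly as $\sum_k p_k^2(H+\log p_k)$. Your side remark on $\mathbb{E}_k[\Delta H]$ agrees with the paper's advantage-aware extension (Corollary~\ref{corollary4}), but a fixed sign of $A$ is not enough there: you need $A$ to be constant in $k$, since otherwise $-\eta\,\mathrm{Cov}_{k\sim\mathbf{p}}(A, S_*-\mathbb{E}_{i\sim\mathbf{p}}[S_i])$ need not vanish.
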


\begin{corollary}\label{corollary3}
For on-policy GRPO training with a batch, the expected value of entropy change factor $S_*^t-\mathbb E_{i\sim \mathbf{p}_t}[S^t_i]$ over the batch of tokens $\mathcal{T}_\mathcal{B}$ is zero:
\begin{equation}
   \mathbb E_{t \in \mathcal{T}_\mathcal{B}}\left[S_*^t-\mathbb E_{i\sim \mathbf{p}_t}[S^t_i]\right]=0.
\end{equation}
\end{corollary}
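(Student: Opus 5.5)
The plan is to reduce Corollary~\ref{corollary3} to Corollary~\ref{corollary2}, applied position by position, and then average over the batch using linearity of expectation. So I would first establish Corollary~\ref{corollary2} carefully, because the batch statement is only a thin layer on top of it.

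\textbf{Step 1: the single-position identity.} Fix a position $t$ with state $s_t$ and next-token distribution $\mathbf p_t$. Under strict on-policy sampling, the chosen token $a^k$ is drawn as $k\sim\mathbf p_t$, so $S_*^t=S_k^t$ is a random variable with law $k\sim\mathbf p_t$. The baseline $\mathbb E_{i\sim\mathbf p_t}[S_i^t]=\sum_i p_i^t S_i^t$ is a deterministic function of $\mathbf p_t$. Hence
\[
\mathbb E_{k\sim\mathbf p_t}\big[S_k^t-\textstyle\sum_i p_i^tS_i^t\big]=\sum_k p_k^tS_k^t-\sum_i p_i^tS_i^t=0 .
\]
This is exactly Corollary~\ref{corollary2}. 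The words ``to first order'' only indicate that, by Theorem~\ref{theorem2}, this factor is the coefficient of $-\alpha$ in $\Delta H$.

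\textbf{Step 2: lifting to the batch.} I would write the batch average $\mathbb E_{t\in\mathcal T_\mathcal B}$ as an expectation over the sampling process. Responses are drawn autoregressively from $\pi_\theta$, and the average is taken uniformly over all token positions in $\mathcal T_\mathcal B$. I would then condition on the prefix $s_t=(q,o_{<t})$. Given $s_t$, the token $a_t$ is distributed as $\mathbf p_t=\pi_\theta(\cdot\mid s_t)$, because $\pi_{\theta_{\text{sample}}}=\pi_\theta$. Conditionally on $s_t$, Step 1 makes each summand's expectation zero. The tower property then gives zero unconditionally, and linearity over $t$ gives zero for the batch average.

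\textbf{Main obstacle.} The delicate point is the normalization $1/\sum_j T_j$ together with the fact that the response lengths $T_i$ are random and depend on the sampled tokens, for instance through EOS. A naive ratio-of-expectations argument does not apply directly. I would first handle this by summing rather than averaging: $\sum_t (S_*^t-\mathbb E[S^t])$ is a martingale-difference sum with respect to the prefix filtration, so it has expectation zero by optional stopping, given bounded lengths. I would then interpret $\mathbb E_{t\in\mathcal T_\mathcal B}$ as the expectation over a token position drawn from the on-policy state-visitation distribution, for which the conditional argument applies verbatim. If stricter rigor is needed, I would state the corollary for the unnormalized sum, or for fixed lengths, and note that the normalized version holds in that sense.
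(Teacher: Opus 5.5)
Your proposal is correct and follows the same route as the paper. Both use the per-position identity $\sum_k p_k^t S_k^t=\mathbb E_{i\sim\mathbf p_t}[S_i^t]$ under on-policy sampling, then average over the batch with linearity and the tower property. Your version is the more careful one: the paper conditions on the whole collection $\{\mathbf p_t\}_{t\in\mathcal T_\mathcal B}$, which also fixes $|\mathcal T_\mathcal B|$, even though later distributions and the response lengths depend on earlier sampled tokens. Your conditioning on the prefix $s_t$, together with the martingale-difference (or state-visitation) reading of the normalized average, handles exactly that dependence.
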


We provide the proofs for these two corollaries in Appendix~\ref{appendix:analysis}.
These corollaries demonstrate that, from both the vocabulary and batch perspectives, the discriminant score $S_*$ possesses a favorable decentralization property under on-policy sampling. 
Therefore, imposing constraints on tokens based on the value of $S_*$ relative to its expectation offers a simple and direct approach to regulating entropy dynamics. 
Based on such analysis, we propose two methods for constraining entropy in Section~\ref{sec:method}.

Considering the potential distribution of the advantage term under model sampling or the statistical distribution across multiple tokens in a training batch, these two corollaries can be further extended to incorporate the complete expectation, including the advantage term. Detailed discussions are provided in Appendix~\ref{appendix:corollaries}, which help further understand the causes of entropy collapse in RFT.

\section{Bridging Entropy Dynamics to Entropy Control Methods}
\subsection{Entropy Discriminator Guided Clipping}
\label{sec:method}

The theoretical analysis provides novel insights into the relationships between the discriminator score $S_*^t$ and the entropy dynamics in RFT. 
Upon this, we can effectively identify tokens within a training batch that exert a disproportionate impact on entropy changes, enabling us to selectively mitigate the influence of such outlier tokens for achieving fine-grained and flexible control over the entropy regularization throughout the training process.

Inspired by Theorem~\ref{theorem1}, we propose a simple yet effective batch-level clipping method.
\begin{algo}
\label{alg1} (\clipb: Batch-Normalized Entropy-Discriminator Clipping):

Let $\mathcal{T}_{\mathcal{B}}$ denote the set of all tokens in the responses of a given batch $\mathcal{B}$. 
We first compute the batch-wise mean of the discriminator scores, $\bar{S} = \E_{t \in \mathcal{T}_{\mathcal{B}}}[S_*^t]$, and the corresponding standard deviation, $\sigma = \sqrt{\Var_{t \in \mathcal{T}_{\mathcal{B}}}[S_*^t]}$. 
During the RFT process, we only preserve the gradients associated with those tokens that satisfy a specific condition by applying the following mask $m_t$ to each token $t$:
\begin{equation}
    m_t = \mathbf{1}\left\{-\mu^-\sigma \le S_*^t - \bar{S} \le \mu^+\sigma \right\}.
\end{equation}
\end{algo}
Here $\mu^+$ and $\mu^-$ are used to control the clipping threshold based on the degree of outlierness.
This algorithm identifies the effects of each token on entropy change and filters out tokens that contribute to severe fluctuations in $\Delta H$. This is accomplished by simply examining the logits of response tokens, combined with the proposed batch-level normalization. 
This operation requires minimal computation on scalar values rather than high-dimensional tensors, thus introducing negligible additional computational cost and allowing for easy integration into existing training frameworks.

Moreover, Theorem~\ref{theorem2} provides a more precise characterization of the entropy change, particularly in the context of GRPO. This analysis motivates us to derive the vocabulary-normalized entropy-discriminator clipping method.
\begin{algo}
\label{alg2} (\clipv: Vocabulary-Normalized Entropy-Discriminator Clipping):

For each token $t$ in a batch $\mathcal{T}_{\mathcal{B}}$, we first define its vocabulary-centered score as $S^t_c = S_*^t - \E_{i \sim \mathbf{p}_t}[S_t^i]$, where $\mathbf{p}_t$ is the policy's predictive distribution over the vocabulary $\mathcal{V}$ at step $t$. We then compute the standard deviation of these centered scores in a batch: $\sigma' = \sqrt{\Var_{t \in \mathcal{T}_{\mathcal{B}}}[S^t_c]}$. As established in Corollary~\ref{corollary3}, the batch-mean of these centered scores, $\E_{t \in \mathcal{T}_{\mathcal{B}}}[S^t_c]$, approximates zero. This simplifies the clipping condition. The mask for a token $t$ is thus defined as:
\begin{equation}
    m_t = \mathbf{1}\left\{-\mu^-\sigma' \le S^t_* - \E_{i \sim \mathbf{p}_t}[S^t_i] \le \mu^+\sigma' \right\}.
    \label{eq:v-clip_mask}
\end{equation}
\end{algo}
In \clipv, the computation of $\E_{i \sim \mathbf{p}_t}[S^t_i]$ introduces some computational overhead. Fortunately, the quantities required to compute this term, such as the policy's logits over the full vocabulary, are often available as intermediate results from the forward pass used for entropy and log-probability calculations. This allows us to evaluate the expectation at a relatively low additional cost. 

In Section~\ref{sec:exp_clip_methods}, we provide empirical studies on the effectiveness of \clipb\ and \clipv.

\subsection{Interpreting Existing Methods through Entropy Dynamics}
Recent works have proposed various entropy-based methods to enhance training stability and effectiveness. 
These methods, however, are often developed from heuristic principles and can necessitate labor-intensive hyperparameter tuning without clear theoretical guidance. 
To provide a better understanding of their underlying mechanisms, we re-examine these methods through the lens of our entropy dynamics analysis (refer to Section~\ref{sec:theory}).

We categorize the related studies into three groups: 
(i) \textit{Clipping Mechanisms}, which stabilize the optimization by constraining the updates of token probability. Representative works include the clipping operation in GRPO~\cite{guo2025deepseek}, the clip-higher method in DAPO~\cite{yu2025dapo} and the separate clipping mechanism in CE-GPPO~\cite{su2025cegppo}.
(ii) \textit{Entropy Regularization}, which regularizes updates to tokens with high entropy, as proposed by~\citet{wang2025beyond}.
(iii) \textit{Probability Weighted Updating}, which constrains the updates based on token probabilities, exemplified by methods from~\cite{he2025rewarding,yang2025not}.

Before we conduct the investigation and interpretation, let's first recall Theorem~\ref{theorem2} and examine, from a statistical perspective, the relationship it reveals between two factors often considered by the methods above, i.e., probability and entropy.
The first term in Theorem~\ref{theorem2}, $S_* = p_k(H+\mathrm{log}\,p_k)$, directly associates these two factors. For tokens sampled with high probability, $S_*$ tends to be larger; similarly, tokens sampled in positions with high entropy also have larger $S_*$. Tokens sampled with larger $S_*$ are more likely to obtain a positive value when calculating the deviation from the expectation $\mathbb{E}_{i\sim \mathbf{p}}[S_i]$, as the expectation represents an average value under the current model's sampling distribution.

As a result, when considering entropy change, for positive samples, higher probability and lower token entropy are often associated with a decrease in entropy. In contrast, lower probability and higher entropy are often linked to an increase in entropy. For negative samples, the trend reverses.

\textbf{Clipping Mechanisms}\quad
Clipping in GRPO can be formulated as a gradient mask for the $t$-th token in response $i$:
\begin{align}
\label{eq:clip}
M_{i,t} &= \mathbf{1}\{A_i > 0,\, r_{i,t}(\theta) \le 1+\epsilon_{\rm high}\} \nonumber\\
        &\quad + \mathbf{1}\{A_i < 0,\, r_{i,t}(\theta) \ge 1-\epsilon_{\rm low}\},
\end{align}
where $r_{i,t}(\theta) = \frac{\pi_\theta(o_{i,t} \mid q, o_{i,<t})}{\pi_{\text{sample}}(o_{i,t} \mid q, o_{i,<t})}$ is the importance ratio. The clipping mechanism prevents an excessive increase in probability for tokens in positive samples and an excessive decrease for tokens in negative samples. Due to the nature of the importance ratio, this mechanism predominantly affects tokens with low probabilities under the sampling policy.

Empirical statistics from~\cite{yu2025dapo} show that clipped tokens typically have a maximum probability of around 0.15. Across a training trajectory where overall token entropy declines. As we analyzed above, these low-probability tokens are associated with the condition $S_* - \mathbb{E}[S_i]<0$. 
For these tokens, the sign of the entropy change is given by $\sign(\Delta H) = -\sign(\varepsilon) \sign(S_*-\mathbb{E}_{i\sim\mathbf{p}}[S_i]) = \sign(\varepsilon)$. 
Consequently, updates on positive samples ($\sign(\varepsilon) = +1$) tend to increase token entropy, while updates on negative samples ($\sign(\varepsilon) = -1$) tend to decrease it. The overall entropy dynamics is a superposition of these two effects: in most cases, it manifests as a rapid decline in entropy, while in others, it exhibits complex fluctuations~\cite{liu2025part}.

The clip-higher method in DAPO, which sets a larger $\epsilon_{\text{high}}$ for positive samples, corresponds to this insight. By relaxing the clipping constraint for positive samples, it preserves their entropy-increasing updates. This targeted intervention counteracts the tendency of entropy decrease during RFT, thereby promoting more exploration and better performance.

Consistent with our theoretical framework, ~\citet{su2025cegppo} empirically demonstrates that high-probability positive tokens and low-probability negative tokens tend to suppress exploration, whereas low-probability positive tokens and high-probability negative tokens encourage exploration.

\textbf{Entropy Regularization}\quad
Entropy regularization refers to methods that compute gradients only for a certain proportion of tokens with high entropy. For example, \citet{wang2025beyond} demonstrates improved performance by applying updates to only the top 20\% of tokens with the highest entropy.

As we analyzed above, high token entropy corresponds to a condition where our theoretical quantity $S_*-\mathbb{E}_{i\sim\mathbf{p}}[S_i]$ is likely to be positive. According to Theorem~\ref{theorem2}, for these tokens, the updates on positive samples would decrease entropy, while updates on negative samples would increase it. The net effect on entropy is therefore determined by the balance between these two opposing forces. The empirical results in~\cite{wang2025beyond}, which show that as the proportion of selected high-entropy tokens is varied, the overall entropy first increases and then decreases relative to a baseline, provide strong evidence for this trade-off.

\textbf{Probability Weighted Updating}\quad
Similar to entropy regularization, Probability Weighted Updating methods constrain or scale token updates based on their probabilities. For example, \citet{he2025rewarding} proposes to assign higher weights to positive samples with low probability. In the context of our analysis, low-probability tokens are associated with $S_*-\mathbb{E}_{i\sim\mathbf{p}}[S_i] < 0$. When these tokens are part of a positive sample, the expected change in entropy is positive.
By amplifying the updates for this specific subset of tokens, the method explicitly promotes gradients that increase token entropy, alleviating the entropy collapse issue. The provided experimental results support this conclusion.

In summary, our analysis offers a unified view for understanding the mechanics of existing methods,
which function by amplifying the effects of tokens contributing to entropy increase or suppressing those leading to entropy decrease, thereby preventing entropy collapse in RFT.

\section{Experiments}
\label{sec:exp}

\begin{figure*}[t]
    \centering
    \includegraphics[width=0.96\linewidth]{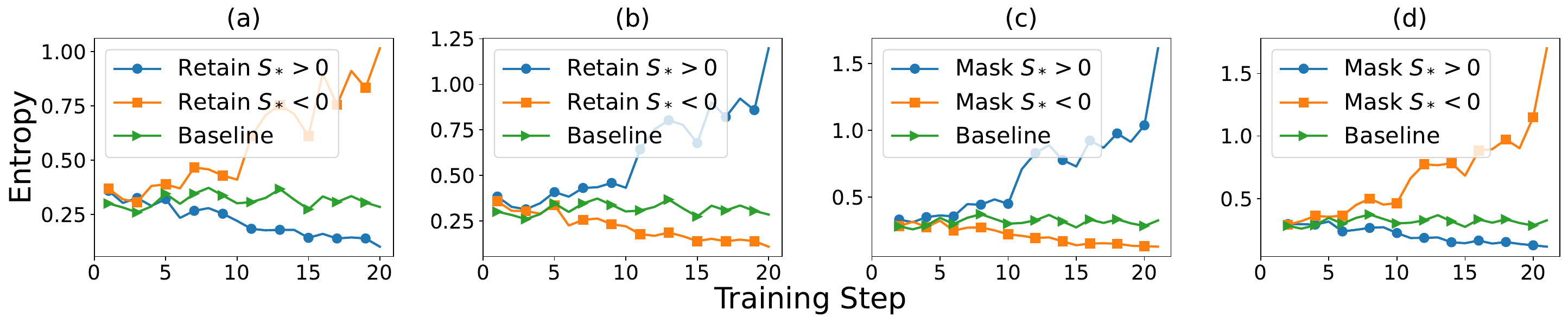}
    \caption{We retain or mask the gradients of tokens satisfying $S_{*} > 0$ or $S_{*} < 0$, respectively. The resulting entropy changes are shown in (a,c) for positive samples, and (b,d) for negative samples.}
    \label{fig:dynamic}
\end{figure*}

\begin{figure*}
    \centering
    \includegraphics[width=0.96\linewidth]{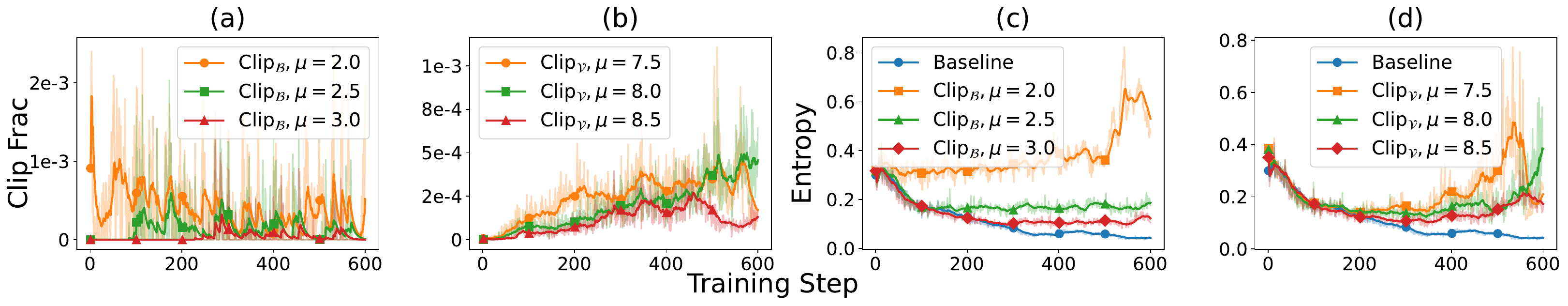}
    \caption{The effects of \clipb and \clipv with different $\mu$ in controlling clip fraction and entropy.}
    \label{fig:entropy}
\end{figure*}

\subsection{Settings}
We select the Qwen2.5-7B-Instruct and Qwen2.5-14B-Instruct~\cite{qwen2025qwen25technicalreport} as our base models for RFT, utilizing the DAPO-Math-17k dataset~\cite{yu2025dapo} as our training set.
Following previous studies~\cite{lightman2023letsverifystepstep}, we exclude 500 questions from the training set to form the validation set (denoted by DAPO500).
We filter out samples from the training set with excessively high ($\geq 15/16$) or low ($\leq 1/16$) pass rates, as evaluated by Qwen2.5-7B-Instruct.\looseness=-1

For evaluation, we adopt two challenging mathematical datasets, i.e., AIME24 and AIME25, to form our test set. We adopt the Avg@32/Pass@32 evaluation metrics for AIME24 and AIME25, and Avg@8/Pass@8 for DAPO500.
Here Avg@K denotes the average accuracy across K responses for each question, while Pass@K represents the probability that at least one of K responses is correct.

\subsection{Empirical Observations of the Entropy Dynamics}

We first provide empirical evidence supporting Theorem~\ref{theorem1}, which posits a close relationship between the discriminator score $S_*$ and the direction of change in token entropy, i.e., $\sign(\Delta H)$.
Specifically, during the training process, we selectively update the loss associated with tokens exhibiting $S_* > 0$ or $S_* < 0$. The standard training process serves as our baseline for comparison. 
For clear observations, we apply these selective updates to positive (rewarding) and negative (punishing) samples separately, presenting the results in Figures~\ref{fig:dynamic}(a) and~\ref{fig:dynamic}(b), respectively. 
These results align with our analysis. For example, in Figure~\ref{fig:dynamic}(a), when we only retain updates for tokens with $S_* > 0$ in positive samples, we observe a decrease in entropy consistent with $\text{sign}(\Delta H) = -\text{sign}(\varepsilon)\cdot \text{sign}(S_*) < 0$. Conversely, retaining updates for tokens with $S_* < 0$ induces an increment in entropy. Such a phenomenon is precisely reversed in Figure~\ref{fig:dynamic}(b) as we apply these operations to negative samples.

To further probe this relationship, we investigate a practical scenario where we mask the gradients of tokens that satisfy specific conditions during the training process.
Similarly, as shown in Figure~\ref{fig:dynamic}(c), when the gradients associated with tokens in positive samples that satisfy $S_* \!>\! 0$ are masked (these are believed to contribute to entropy decrease), the entropy increases uncontrollably. Conversely, masking the gradients of tokens that satisfy $S_* \!<\! 0$ leads to a continuous decrease in entropy.
Figure~\ref{fig:dynamic}(d) illustrates that performing the same masking operations on negative samples results in the opposite behavior.
These experimental results further confirm our analysis, which suggests that the sign of the discriminator score $S_*$ is a reliable predictor of tokens' influence on entropy dynamics within RFT.\looseness=-1

In Figure~\ref{fig:ES}, we illustrate the distribution of $S_*$ within a training batch and its deviation from its sampling expectation, as involved in Theorem~\ref{theorem2}. The value of $\mathbb{E}_{t\sim \mathcal{T}_\mathcal{B}}[S_*^t - \mathbb{E}_{i \sim \mathbf p}[S_i^t]]$ is three orders of magnitude smaller than that of $\mathbb{E}_{t\sim \mathcal{T}_\mathcal{B}}[S_*^t]$, and approaches zero, effectively validating Corollary~\ref{corollary3}.

\subsection{Effects of Entropy Discriminator Clipping Methods}\label{sec:exp_clip_methods}

In this subsection, we validate the effectiveness of the clipping methods proposed in Section~\ref{sec:method}, including \clipb and \clipv.
Considering that entropy empirically exhibits a clear decreasing trend within RFT, we choose negative samples as the primary focus and apply our clipping methods to mask the losses of specific tokens.
As shown in Figures~\ref{fig:entropy}(a) and~\ref{fig:entropy}(b), the hyper-parameter $\mu$ in the \clipb and \clipv methods provides effective control over the number of clipped tokens (a larger $\mu$ indicates a smaller clip proportion), thereby supporting flexible adjustment of the intervention intensity. In Figures~\ref{fig:entropy}(c) and~\ref{fig:entropy}(d), we illustrate the effects of the clipping methods in controlling entropy with different values of $\mu$. We observe that both \clipb and \clipv successfully mitigate the entropy decay to excessively low levels, as in the baseline (the standard RFT training).

Existing RFT studies~\cite{yu2025dapo,liao2025enhancing} suggest that maintaining a certain level of entropy can retain the model's exploration capabilities, leading to better model performance. Therefore, we validate the performance of models trained using \clipb and \clipv, as summarized in Table~\ref{tab:performance}.
These results demonstrate that both \clipb and \clipv achieve outperformance compared to standard GRPO across various datasets, confirming their effect in preserving model exploration by controlling entropy.

Besides, we also extend the evaluation across two dimensions: the training algorithm (by integrating with PPO) and model diversity (including Qwen3-4B-Base, DeepSeek-Distilled-Llama3-8B, and InternLM3-8B). As shown in Appendix~\ref{appendix:moreexp}, the consistent performance gains across these settings confirm that our methods are effective at preserving model exploration and enhancing model performance.

\definecolor{deepred}{rgb}{0.8,0,0}
\definecolor{darkgreen}{rgb}{0,0.5,0}
\begin{table*}[t]
    \centering
    \caption{Comparison of vanilla GRPO and our methods on Avg@K and Pass@K ($K$=32 for AIME24/25 and $K$=8 for DAPO500).}
    \label{tab:performance}
    \begin{tabular}{*{7}{l}}
        \toprule
        \multirow{2}{*}{Method} & \multicolumn{2}{c}{\textbf{AIME24}} & \multicolumn{2}{c}{\textbf{AIME25}} & \multicolumn{2}{c}{\textbf{DAPO500}} \\
        \cmidrule(lr){2-3} \cmidrule(lr){4-5} \cmidrule(lr){6-7}
         & Avg@K & Pass@K & Avg@K & Pass@K & Avg@K & Pass@K \\
        \midrule
        Qwen2.5-7B-Inst & 11.35 & 36.67 & 6.67 & 33.33 & 31.55 & 70.2 \\
        \midrule
        GRPO & 16.88 & 50.00 & 15.42 & 50.00 & 48.03 & 76.8 \\
        \rowcolor{gray!20}
        GRPO+\clipb & \textbf{19.69}\,\textcolor{deepred}{\scriptsize (+2.81)} & \textbf{56.67}\,\textcolor{deepred}{\scriptsize (+6.67)} 
            & \textbf{16.35}\,\textcolor{deepred}{\scriptsize (+0.93)} & 53.33\,\textcolor{deepred}{\scriptsize (+3.33)} 
            & \textbf{49.68}\,\textcolor{deepred}{\scriptsize (+1.65)} & \textbf{80.2}\,\textcolor{deepred}{\scriptsize (+3.4)} \\
        \rowcolor{gray!20}
        GRPO+\clipv & 18.12\,\textcolor{deepred}{\scriptsize (+1.24)} & 53.33\,\textcolor{deepred}{\scriptsize (+3.33)} 
            & 15.94\,\textcolor{deepred}{\scriptsize (+0.52)} & \textbf{56.67}\,\textcolor{deepred}{\scriptsize (+6.67)} 
            & 49.65\,\textcolor{deepred}{\scriptsize (+1.62)} & 79.0\,\textcolor{deepred}{\scriptsize (+2.2)} \\
        \midrule
        Qwen2.5-14B-Inst & 12.14 & 41.67 & 11.72 & 38.33 & 40.22 & 74.7 \\
        \midrule
        GRPO & 22.50 & 66.33 & 17.60 & 50.00 & 52.95 & 84.0 \\
        \rowcolor{gray!20}
        GRPO+\clipb & 23.33\,\textcolor{deepred}{\scriptsize (+0.83)} & \textbf{66.67}\,\textcolor{deepred}{\scriptsize (+0.34)} 
            & 20.62\,\textcolor{deepred}{\scriptsize (+3.02)} & \textbf{56.67}\,\textcolor{deepred}{\scriptsize (+6.67)} 
            & 60.35\,\textcolor{deepred}{\scriptsize (+7.40)} & 85.6\,\textcolor{deepred}{\scriptsize (+1.6)} \\
        \rowcolor{gray!20}
        GRPO+\clipv & \textbf{23.44}\,\textcolor{deepred}{\scriptsize (+0.94)} & \textbf{66.67}\,\textcolor{deepred}{\scriptsize (+0.34)} 
            & \textbf{21.35}\,\textcolor{deepred}{\scriptsize (+3.75)} & \textbf{56.67}\,\textcolor{deepred}{\scriptsize (+6.67)} 
            & \textbf{61.92}\,\textcolor{deepred}{\scriptsize (+8.97)} & \textbf{86.6}\,\textcolor{deepred}{\scriptsize (+2.6)} \\
        \bottomrule
    \end{tabular}
\end{table*}

\begin{figure*}[tbp]
  \centering
  \begin{minipage}[t]{0.45\textwidth}
    \centering
    \includegraphics[width=0.6\linewidth]{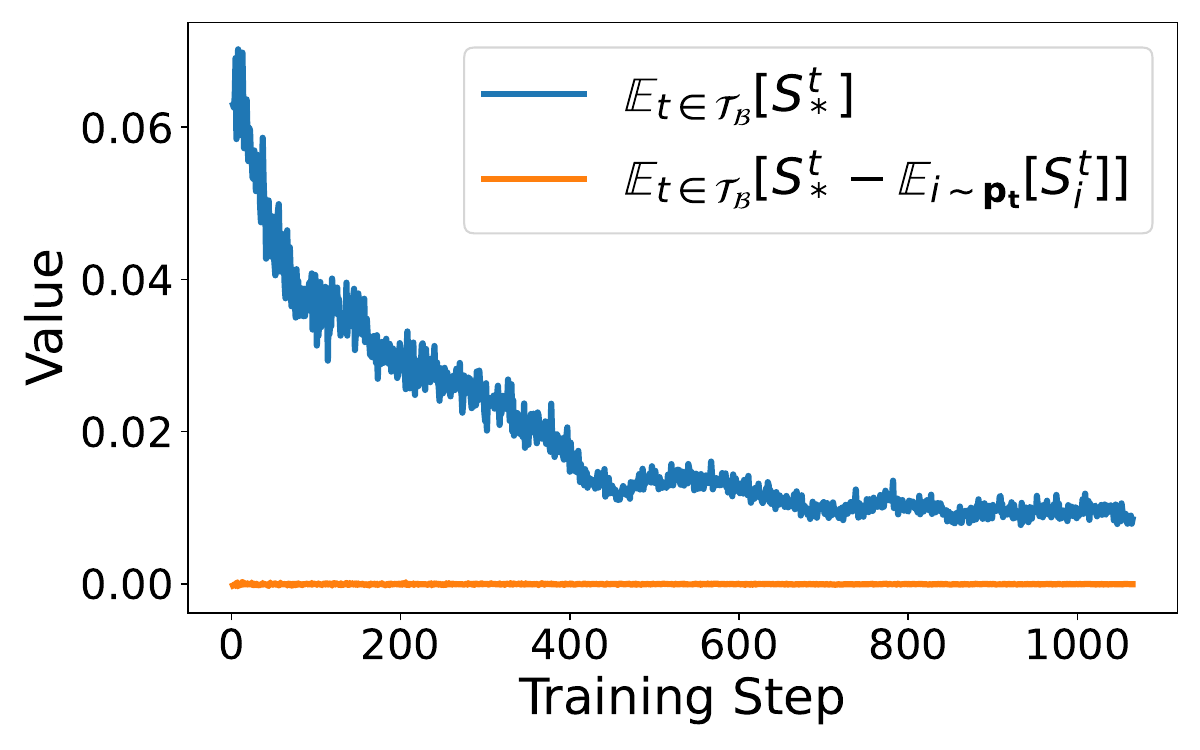}
    \caption{The batch-averaged value of $S_*$ and $S_* - \mathbb{E}_{i\sim \mathbf p}[S_i]$. 
    }
    \label{fig:ES}
  \end{minipage}
  \hfill
  \begin{minipage}[t]{0.52\textwidth}
    \centering
    \includegraphics[width=0.92\linewidth]{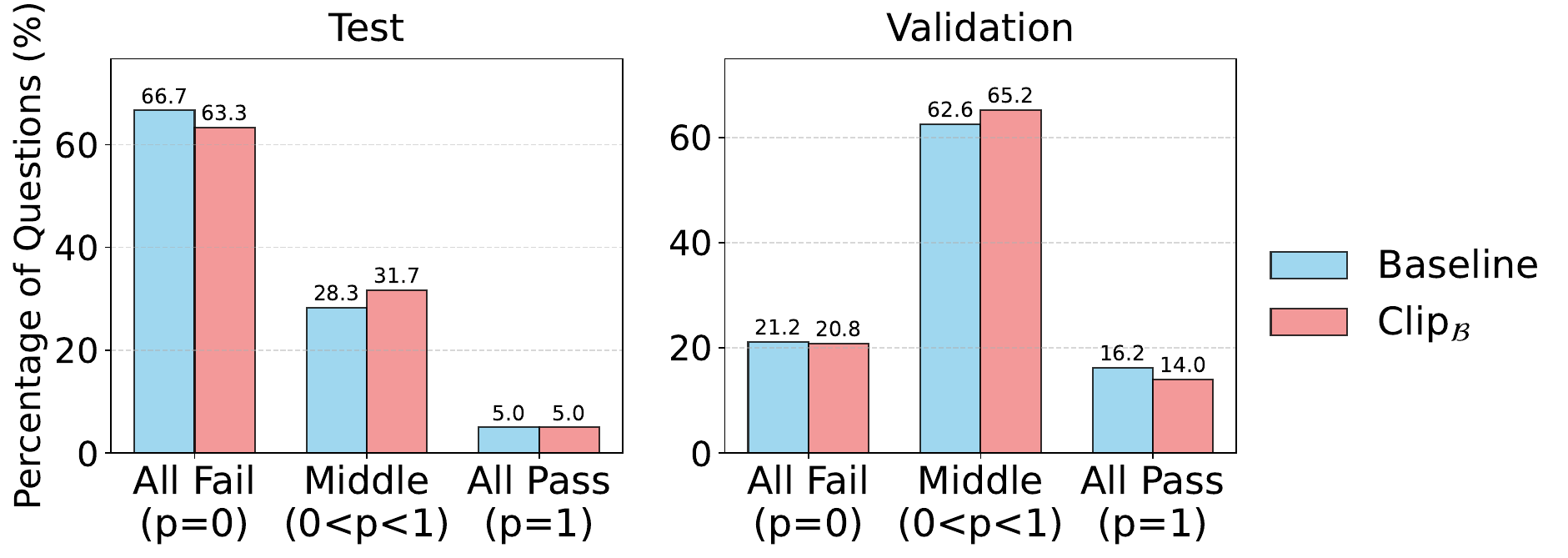}
    \caption{Comparison between \clipb and vanilla GRPO on the distribution of problem pass rates. %
    }
    \label{fig:passrate}
  \end{minipage}
\end{figure*}

\subsection{Analysis of Exploration versus Exploitation}

In Table~\ref{tab:performance}, we compare model performance using Pass@K and Avg@K. A significant gain in Pass@K indicates that the model can generate diverse responses for solving problems (exploration), whereas improvements in Avg@K primarily reflect exploitation of similar high-reward patterns.
The experimental results demonstrate that our methods not only achieve significant improvements in both Pass@K and Avg@K across all datasets.
These results confirm that stabilizing entropy with the proposed clipping method fosters greater solution diversity and encourages the model to discover correct reasoning paths for a wider array of problems.

Moreover, we further conduct a study on the distribution of pass rates among multiple rollouts for individual problems.
Taking the Qwen-2.5-7B-Instruct model and the \clipb method as an example, we illustrate the results in Figure~\ref{fig:passrate}. 
For the standard GRPO, the proportion of problems that are completely solved or completely failed is significantly higher than that of \clipb. 
This indicates that GRPO excessively prioritizes exploitation while neglecting the importance of exploration. 
Conversely, \clipb focuses more on encouraging exploration, resulting in a pass rate distribution that is more concentrated around the middle range. 
This suggests that the performance gains achieved by our method stem from encouraging the model to explore solutions for a broader range of problems, rather than simply memorizing easier problems that could be solved with higher certainty.

\section{Related Works}

Reinforcement fine-tuning (RFT) has been widely adopted in tuning LLMs, including representative methods such as GRPO~\cite{guo2025deepseek}, DAPO~\cite{yu2025dapo}, and GSPO~\cite{zheng2025gspo}.
To enhance LLM performance, many strategies have been proposed from diverse aspects~\cite{liu2025part,hu2025reinforce++,yu2025dapo}.
Among these works, some tricks are developed based on the influence of entropy on model behavior, such as explicitly entropy regularization~\cite{open-Reasoner}, entropy-based token selection~\cite{wang2025beyond}, flexible clipping schemes~\cite{yu2025dapo,su2025cegppo}, and others~\cite{liao2025enhancing}.
Recent work~\cite{cui2025entropy} linked entropy changes to model sampling distributions and modeled the performance-entropy relationship, highlighting the importance of studying entropy dynamics in RFT.
While establishing a solid theoretical foundation, it relies on the advantage of unsampled tokens, which is difficult to estimate in most RFT algorithms and presents challenges for practical application.

\section{Conclusions}
In this study, we focus on a theoretical framework to provide a principled understanding of entropy dynamics in RFT. We quantify the entropy change and further extend this analysis to a practical GRPO optimization step, revealing that entropy fluctuations arise from the combined effect of a token’s update direction, its probability, and the policy entropy. 
These insights offer explanations for the commonly observed entropy collapse phenomenon, guide the development of entropy controlling strategies, and unify the interpretation of existing entropy-based methods. 
We hope that the theoretical framework can foster a clear understanding of the underlying mechanisms of entropy dynamics in RFT, thereby accelerating progress in the field.

\bibliography{example_paper}
\bibliographystyle{icml2026}

\newpage
\appendix
\onecolumn
\section{Proof of Corollaries}
\label{appendix:analysis}
\textbf{Corollary 3.4.}
\textit{To a first-order approximation, with on-policy sampling, the expected entropy change of a token within GRPO optimization is zero, i.e.}
\begin{equation*}
    \mathbb{E}_{k \sim \mathbf{p}}\left[S_* - \mathbb{E}_{i \sim \mathbf{p}}[S^i]\right] = 0.
\end{equation*}

\begin{proof}
We derive the results as follows:
\begin{equation*}
\begin{split}
\mathbb{E}_{k \sim \mathbf{p}}\left[S_* - \mathbb{E}_{i \sim \mathbf{p}}[S^i]\right] &= \mathbb{E}_{k \sim \mathbf{p}}\Biggl[p_k\bigl(H+\log p_k\bigr)\; -\sum_{i=1}^{V}p_i^{2}\bigl(H+\log p_i\bigr)\Biggr]\\
&=\sum_{k=1}^{V}p_k^{2}\bigl(H+\log p_k\bigr)-\sum_{i=1}^{V}p_i^{2}\bigl(H+\log p_i\bigr)\\
& = 0.
\end{split}
\end{equation*}
\end{proof}

\paragraph{Corollary 3.5.}
\textit{For on-policy GRPO training with a batch, the expected value of entropy change factor $S_*^t-\mathbb E_{i\sim \mathbf{p}_t}[S^t_i]$ over the batch of tokens $\mathcal{T}_\mathcal{B}$ is zero:}
\begin{equation}
   \mathbb E_{t \in \mathcal{T}_\mathcal{B}}\left[S_*^t-\mathbb E_{i\sim \mathbf{p}_t}[S_i^t]\right]=0.
\end{equation}

\begin{proof}
For each token $t\in\mathcal{T}_{\mathcal B}$, let $\mathbf{p}_t=(p_t^1,\dots,p_t^V)$ be the on-policy token distribution, $H_t=-\sum_{i=1}^V p_t^i\log p_t^i$, and $S_t^i:=p_t^i\bigl(H_t+\log p_t^i\bigr)$. Draw the action index on-policy: $K_t\sim\mathrm{Cat} \bigl(\mathbf{p}_t\bigr)$.
Then, conditioning on $\mathbf{p}_t$,
\[
\mathbb E\!\left[S_t^{K_t}\,\middle|\,\mathbf{p}_t\right]
=\sum_{i=1}^V p_t^i\,S_t^i
=\sum_{i=1}^V (p_t^i)^2\bigl(H_t+\log p_t^i\bigr)
=\mathbb E_{i\sim \mathbf{p}_t}[S_t^i].
\]
Hence $\mathbb E\!\left[S_t^{K_t}-\mathbb E_{i\sim \mathbf{p}_t}[S_t^i]\,\middle|\,\mathbf{p}_t\right]=0$ for each token $t$. Averaging over the batch and using linearity of expectation,
\[
\mathbb E\!\left[\frac{1}{|\mathcal{T}_{\mathcal B}|}\sum_{t\in\mathcal{T}_{\mathcal B}}\!\Bigl(S_t^{K_t}-\mathbb E_{i\sim \mathbf{p}_t}[S_t^i]\Bigr)\,\middle|\,\{\mathbf{p}_t\}_{t\in\mathcal{T}_{\mathcal B}}\right]
=\frac{1}{|\mathcal{T}_{\mathcal B}|}\sum_{t\in\mathcal{T}_{\mathcal B}} 0=0.
\]
Finally, applying the tower property removes the conditioning and yields the stated result.
\end{proof}

\section{Detailed Experiment Setup}
\label{appendix:experiments}
All experiments are conducted on NVIDIA A100 and H20 GPUs.
We implement the experiment with the Trinity-RFT~\cite{pan2025trinityrftgeneralpurposeunifiedframework} framework.

For the training process, we adopt the Adam optimizer with hyperparameters $(0.9, 0.999)$.
We set the training batch size to 64, the number of rollouts to 16 for Qwen2.5-7B-Instruct and Qwen2.5-14B-Instruct and 8 for other models, and employ a learning rate of $4 \times 10^{-7}$. The temperature is set to 1.0 for sampling rollouts and 0.7 for evaluation.

\paragraph{Reward design}
The reward for each response is determined by its answer correctness, i.e., (1) $r_i = 1$ if the answer and format are correct; (2) $r_i = 0$, otherwise.

\section{Extension to Advantage-Aware Analysis}
\label{appendix:corollaries}
In this section, we extend the findings of Corollary~\ref{corollary2} and Corollary~\ref{corollary3} to incorporate advantage estimation. We analyze the expectations of the full entropy change expression presented in Theorem~\ref{theorem2} from two complementary perspectives: model sampling and batch averaging.

\subsection{Extension of Corollary~\ref{corollary2} to Model Sampling}
\label{subsec:E1}
We assume that at each position \( t \), every token ID in the vocabulary has a latent advantage value, i.e., \( A = \mathbf{A}(i) \) for \( i \sim \mathbf{p}_t \). Building upon Theorem~\ref{theorem2}, we derive the following corollary regarding the expected entropy change.

\begin{corollary}
For on-policy GRPO training, the first-order expectation of the token-wise entropy change is given by:
\label{corollary4}
\begin{equation}
    \mathbb{E}_{k\sim\mathbf{p}}[\Delta H] = -\eta\,\mathrm{Cov}_{k\sim\mathbf{p}}(A,S_*-\mathbb{E}_{i\sim \mathbf{p}}[S_i]).
\end{equation}
\end{corollary}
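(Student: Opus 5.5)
The plan is to start from Theorem~\ref{theorem2} with the effective step size written out. With on-policy sampling we have $r=1$, so $\alpha = \eta\,\mathbf{A}(k)$ when token $k$ is sampled. Theorem~\ref{theorem2} then gives, to first order,
\begin{equation*}
\Delta H(k) \approx -\eta\,\mathbf{A}(k)\,\bigl(S_k - \mathbb{E}_{i\sim\mathbf p}[S_i]\bigr),
\end{equation*}
where $S_* = S_k$ because $k$ is the sampled index. The baseline $\mathbb{E}_{i\sim\mathbf p}[S_i]$ and the learning rate $\eta$ do not depend on the sampled $k$. The only randomness therefore sits in the pair $(\mathbf{A}(k), S_k)$ with $k\sim\mathbf p$.

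Next I take the expectation over $k\sim\mathbf p$. Writing $D_k := S_k - \mathbb{E}_{i\sim\mathbf p}[S_i]$, this gives $\mathbb{E}_{k\sim\mathbf p}[\Delta H] = -\eta\,\mathbb{E}_{k\sim\mathbf p}[\mathbf{A}(k) D_k]$. I then apply the identity
\begin{equation*}
\mathbb{E}[XY] = \mathrm{Cov}(X,Y) + \mathbb{E}[X]\,\mathbb{E}[Y]
\end{equation*}
with $X=\mathbf A(k)$ and $Y=D_k$. By Corollary~\ref{corollary2}, $\mathbb{E}_{k\sim\mathbf p}[D_k]=0$, so the product-of-means term vanishes and only the covariance remains. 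That is the stated identity.

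Note that no assumption $\mathbb{E}[A]=0$ is needed: the zero-mean property of $D_k$ from Corollary~\ref{corollary2} alone kills the cross term. The main point to be careful about is the interchange of expectation with the $O(\alpha^2)$ remainder. I would simply state that the result holds at first order, since each remainder is $O(\eta^2\,\mathbf{A}(k)^2)$ and its $\mathbf p$-average is still $O(\eta^2)$ for bounded advantages. I would also state explicitly that the logits $\mathbf z$, and hence $\mathbf p$, $H$, and all $S_i$, are fixed when the expectation is taken.
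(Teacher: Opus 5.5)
Your proposal is correct and follows the paper's proof: substitute $\alpha=\eta A$ (with $r=1$) into Theorem~\ref{theorem2}, take the expectation over $k\sim\mathbf p$, and use Corollary~\ref{corollary2} to kill the product-of-means term in the covariance decomposition. The only cosmetic difference is that you decompose $\mathbb{E}[A D_k]$ directly, whereas the paper splits it into $\mathrm{Cov}(A,S_*)$ and $\mathrm{Cov}(A,\mathbb{E}_{i\sim\mathbf p}[S_i])$ and then recombines them, which is equivalent.
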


\begin{proof}
   We begin by applying the result from Theorem~\ref{theorem2}. Recall that $\alpha = \eta r A$. Considering the constant $\eta$ and $r=1$ in the on-policy setting, the first-order expectation of token entropy change under on-policy sampling can be given by:
\begin{equation}
\label{eq:deltah_sample}
    \mathbb{E}_{k\sim\mathbf{p}}[\Delta H] = -\eta\mathbb{E}_{k \sim \mathbf{p}}\big[A(S_* - \mathbb{E}_{i \sim \mathbf{p}}[S_i])\big].
\end{equation}
We decompose the covariance in this equation, which gives:
\begin{align*}
    \mathbb{E}_{k\sim\mathbf{p}}[\Delta H] =& -\eta\bigl\{ \mathbb{E}_{k\sim\mathbf{p}}[A]\mathbb{E}_{k\sim\mathbf{p}}[S_*]
+\mathrm{Cov}_{k\sim\mathbf{p}}(A,S_*)\\
&-\mathbb{E}_{k\sim\mathbf{p}}[A]\mathbb{E}_{k\sim\mathbf{p}}[\mathbb{E}_{i\sim \mathbf{p}}[S_i]]
-\mathrm{Cov}_{k\sim\mathbf{p}}(A,\mathbb{E}_{i\sim \mathbf{p}}[S_i])\bigr\}.
\end{align*}

By Corollary~\ref{corollary2}, we have $\mathbb{E}_{k\sim\mathbf{p}}[S_*]- \mathbb{E}_{k\sim\mathbf{p}}[\mathbb{E}_{i\sim \mathbf{p}}[S_i]]=0$. Substituting this into the equation above, we have:
\begin{equation*}
    \mathbb{E}_{k\sim\mathbf{p}}[\Delta H] = -\eta\,\mathrm{Cov}_{k\sim\mathbf{p}}(A,S_*-\mathbb{E}_{i\sim \mathbf{p}}[S_i]).
\end{equation*} 
\end{proof}
\paragraph{Implications.} Corollary~\ref{corollary4}, based on the on-policy policy gradient formula, provides a clean expression for entropy change. It decouples the advantage from the core entropy change term \(S_* - \mathbb{E}_{i\sim\mathbf{p}}[S_i]\), and establishes their relationship through a covariance. 

It is worth noting that, In GRPO, the advantage for tokens not actually sampled is undefined and non-computable; therefore, Corollary~\ref{corollary4} cannot be directly applied in algorithmic implementation. Nevertheless, it offers a theoretical potential for understanding entropy collapse in the GRPO training process.

In GRPO, the way advantages are obtained is coupled with the policy model distribution, which promotes entropy collapse. We will verify this hypothesis from a batch-level perspective in the next subsection.

\subsection{Extension of Corollary~\ref{corollary3}}
The batch-level entropy is defined by the arithmetic average of token entropies within a batch:
\begin{equation*}
    H_{\mathcal{T_B}} = \frac{1}{|\mathcal{T_B}|}\sum_{t\in\mathcal{T_B}}H_t
\end{equation*}

Therefore, batch-level entropy change is also the arithmetic average of token entropy change:
\begin{equation}
    \Delta H_{\mathcal{T_B}} = \frac{1}{|\mathcal{T_B}|}\sum_{t\in\mathcal{T_B}} \Delta H_t
    \label{eq:delta-h-batch}
\end{equation}
Based on the above definition, we derive the following corollary:
\begin{corollary}
\label{corollary5}
For on-policy GRPO training with a batch, the first-order batch-wise entropy change of tokens $\mathcal{T_B}$ is given by:
    \begin{equation}
        \Delta H_{\mathcal{T_B}}=-\eta\,\mathrm{Cov}_{\mathcal{B}}(A,S_*-\mathbb{E}_{i\sim \mathbf{p}}[S_i]).
    \end{equation}
\end{corollary}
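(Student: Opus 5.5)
The plan mirrors the proof of Corollary~\ref{corollary4}, with the expectation over vocabulary sampling replaced by the empirical average over the batch $\mathcal{T_B}$. Write $D_t \triangleq S_*^t-\mathbb{E}_{i\sim\mathbf{p}_t}[S_i^t]$ for the centered discriminant of token $t$, and $A_t$ for the advantage of the response containing $t$.

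First, apply Theorem~\ref{theorem2} token by token. In the on-policy setting $r=1$, so $\alpha_t=\eta A_t$ and, to first order, $\Delta H_t=-\eta A_t D_t$. This treats each token's update as acting only on its own logits, as in the single-step analysis above. Substituting into \eqref{eq:delta-h-batch} gives
\begin{equation*}
\Delta H_{\mathcal{T_B}} = -\eta\,\frac{1}{|\mathcal{T_B}|}\sum_{t\in\mathcal{T_B}} A_t D_t = -\eta\,\mathbb{E}_{\mathcal{B}}[A D].
\end{equation*}

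Second, decompose the empirical second moment as a covariance plus a product of means:
\begin{equation*}
\mathbb{E}_{\mathcal{B}}[AD]=\mathrm{Cov}_{\mathcal{B}}(A,D)+\mathbb{E}_{\mathcal{B}}[A]\,\mathbb{E}_{\mathcal{B}}[D].
\end{equation*}
By Corollary~\ref{corollary3}, $\mathbb{E}_{\mathcal{B}}[D]=0$ under on-policy sampling, so the product term vanishes and the stated identity follows.

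The main obstacle is the status of $\mathbb{E}_{\mathcal{B}}[D]=0$. Corollary~\ref{corollary3} holds only in expectation over the on-policy draws $K_t\sim\mathbf{p}_t$, whereas the realized batch average is only approximately zero (empirically, Figure~\ref{fig:ES} shows it is three orders of magnitude smaller than $\mathbb{E}_{\mathcal{B}}[S_*]$). I would therefore state the result either as an identity in expectation or as an approximation with error $-\eta\,\mathbb{E}_{\mathcal{B}}[A]\,\mathbb{E}_{\mathcal{B}}[D]$.

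One mitigating point is that GRPO advantages are standardized within each group. If all groups contribute equal token counts, this makes $\mathbb{E}_{\mathcal{B}}[A]=0$ exactly, so the product term vanishes regardless of $D$. With unequal response lengths, this holds only approximately, and I would note it as a second source of first-order slack alongside Corollary~\ref{corollary3}.
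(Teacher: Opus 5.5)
Your argument is the paper's own: Theorem~\ref{theorem2} applied token by token with $r=1$, averaged through \eqref{eq:delta-h-batch}, with $\mathbb{E}_{\mathcal{B}}[AD]$ split into $\mathrm{Cov}_{\mathcal{B}}(A,D)+\mathbb{E}_{\mathcal{B}}[A]\,\mathbb{E}_{\mathcal{B}}[D]$ and the product term discarded via Corollary~\ref{corollary3}. Your caveat is sharper than the paper, which silently treats the realized batch mean of $D$ as exactly zero; the explicit error term $-\eta\,\mathbb{E}_{\mathcal{B}}[A]\,\mathbb{E}_{\mathcal{B}}[D]$ is the correct repair, though two details need fixing: an ``identity in expectation'' does not follow directly, because $A$ depends on the sampled tokens, so $\mathbb{E}_{\mathcal{B}}[A]$ and $\mathbb{E}_{\mathcal{B}}[D]$ are not independent; and what makes $\mathbb{E}_{\mathcal{B}}[A]=0$ exactly is equal response lengths within each group, not equal token counts per group.
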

\begin{proof}
Applying the results in Theorem~\ref{theorem2} into equation~\ref{eq:delta-h-batch} gives:
\begin{equation}
    \Delta H_{\mathcal{T_B}} = -\frac{1}{|\mathcal{T_B}|}\sum_{t\in\mathcal{T_B}} \alpha(S_*-\mathbb{E}_{i\sim \mathbf{p}_t}[S_i^t])] = -\mathbb{E}_\mathcal{B}[\alpha(S_*-\mathbb{E}_{i\sim \mathbf{p}}[S_i])],
\end{equation}
where $\mathbb{E}_\mathcal{B}$ refers to statistical expectation (i.e., an arithmetic average over batch $\mathcal{B}$).

Recall the definition of $ \alpha =\eta r A$: the learning rate $ \eta $ is constant within a batch; $ r $ is constantly 1 in the on-policy setting. The advantage $A$ estimated in the GRPO algorithm is NOT independent of the chosen token id in a batch $ \mathcal{T_B} $, i.e.,
\begin{equation*}
 \Delta H_{\mathcal{T_B}} = -\eta\,\mathbb{E}_\mathcal{B}[A(S_*-\mathbb{E}_{i\sim \mathbf{p}}[S_i])].
\end{equation*}

We further apply covariance decomposition to $ \Delta H_\mathcal{T_B} $ within a training batch:
\begin{equation*}
    \Delta H_{\mathcal{T_B}}/\eta
=-\big\{ \mathbb{E}_{\mathcal{B}}[A]\mathbb{E}_{\mathcal{B}}[S_*]
+\mathrm{Cov}_{\mathcal{B}}(A,S_*)
-\mathbb{E}_{\mathcal{B}}[A]\mathbb{E}_{\mathcal{B}}[\mathbb{E}_{i\sim \mathbf{p}}[S_i]]
-\mathrm{Cov}_{\mathcal{B}}(A,\mathbb{E}_{i\sim \mathbf{p}}[S_i])\big\}.
\end{equation*}

According to Corollary~\ref{corollary3}, we have$ \mathbb{E}_{\mathcal{B}}[S]-\mathbb{E}_{\mathcal{B}}[\mathbb{E}_{i\sim \mathbf{p}}[S_i]]=0 $, which gives:
\begin{equation}
    \Delta H_{\mathcal{T_B}}/\eta
=
-\mathrm{Cov}_{\mathcal{B}}(A,S_*-\mathbb{E}_{i\sim \mathbf{p}}[S_i]).
\end{equation}
Finally, we multiply both sides of the above equation by \(\eta\) to complete the proof.
\end{proof}

\paragraph{Implications.} 
Corollary~\ref{corollary5} provides a computable form analogous to Corollary~\ref{corollary4} from the batch perspective. We conduct a experiment to monitored the quantity \(-\mathrm{Cov}_{\mathcal{B}}(A, S_* - \mathbb{E}_{i \sim \mathbf{p}}[S_i])\) during training. As shown in Figure~\ref{fig:cov}, its value has a larger magnitude in the negative portion compared with the positive ones. 
This observation further validates the hypothesis in Appendix~\ref{subsec:E1}. The model tends to obtain correct answers (i.e., \(A > 0\)) by producing “safe” responses, those with relatively high probability, for which \(S_* - \mathbb{E}[S_i]\) tends to be positive, whereas exploratory behaviors are more likely to yield incorrect answers. This dynamic continually suppresses the model’s propensity to explore diverse answers.

Algorithm~\hyperref[alg2]{2} directly computes the factor $S_*-\mathbb{E}_{i\sim \mathbf{p}}[S_i]$, and masks those who contribute extremely significantly to the covariance expression.

For example, for negative samples where $A<0$, Theorem~\ref{theorem2} masks those tokens with large negative $S_*-\mathbb{E}_{i\sim \mathbf{p}}[S_i]$, who contributes a large negative factor to equation~\ref{eq:delta-h-batch}, stabling the change of entropy.

Algorithm~\hyperref[alg1]{1} estimates this factor in a batch perspective, achieving better computational efficiency.

\paragraph{Discussions about Parameter Sharing.}
In Corollary~\ref{corollary5}, the parameter updates induced by different tokens are linearly superimposed. While it is worth noting that in practical LLM training, parameters are shared across tokens and the global update dynamics involve complex coupling effects, establishing a rigorous theoretical model for such high-dimensional parameter interference remains an open challenge in the field of machine learning theory. Following the context of previous research~\cite{yu2025dapo,he2025rewarding,su2025cegppo,ren2024learning,cui2025entropy,liao2025enhancing,wang2025beyond}, our framework focuses on the microscopic atomic unit of this process, the single-token update, which serves as the fundamental building block of the global dynamics. In standard first-order optimization (e.g., SGD or Adam), the total gradient is the accumulation of individual token gradients. Under the regime of small learning rates characteristic of fine-tuning, the superposition of these single-token effects constitutes the dominant factor driving the entropy dynamics, while higher-order inter-token coupling effects are implicitly handled by the optimizer. Our empirical observations in Figure~\ref{fig:dynamic} and Figure~\ref{fig:cov} strongly corroborate this: the entropy shifts trend predicted by our decoupled single-token analysis ($S_*$) accurately match the actual batch-wise training dynamics, and the overall trend of entropy change in standard RFT are correctly predicted, suggesting that the first-order approximation effectively captures the primary mechanism of entropy evolution despite the underlying parameter sharing.

\begin{figure}[htb]
    \centering
    \includegraphics[width=0.5\linewidth]{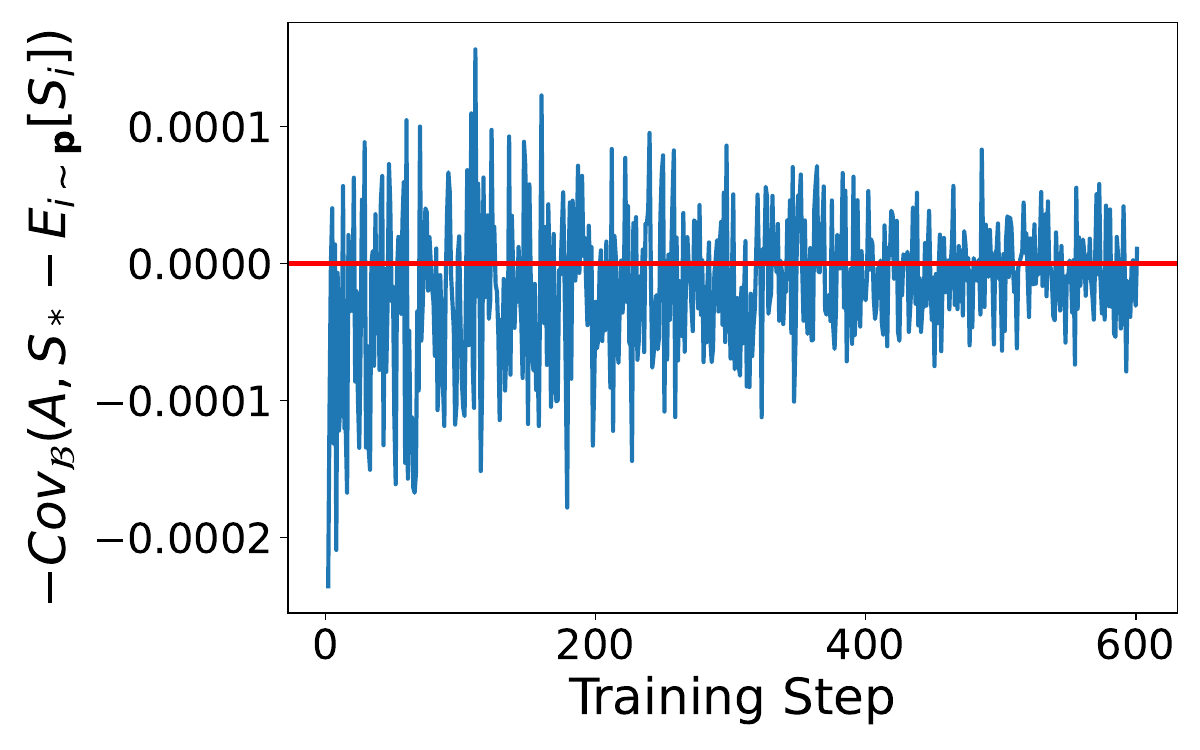}
    \caption{The value of $-\mathrm{Cov}_\mathcal{B}(A,S_*-\mathbb{E}_{i\sim\mathbf{p}}[S_i])$.}
    \label{fig:cov}
\end{figure}

\section{Extension to off-policy scenarios}
The derivation of Theorem~\ref{theorem2} is based on the general GRPO formulation and is not restricted to the on-policy setting. In this section, we extend Corollaries~\ref{corollary2},~\ref{corollary3},~\ref{corollary4} and~\ref{corollary5} to the off-policy scenario. When off-policy sampling is used, similar expressions can be obtained by utilizing the importance ratio $ r = \pi_\theta / \pi_{\theta_{\text{sample}}} $.

\paragraph{Corollary 3.4.1.}\label{corollary1.1}\textit{To a first-order approximation, the expected entropy change factor $r(S_* - \mathbb{E}_{i \sim \mathbf{p}}[S_i])$ of a token within GRPO optimization is zero, i.e.,}
\begin{equation*}
    \mathbb{E}_{k \sim \mathbf{p}'}\left[r(S_* - \mathbb{E}_{i \sim \mathbf{p}}[S_i])\right] = 0,
\end{equation*}
where $\mathbf{p}'$ and $\mathbf{p}$ denote the sampling policy’s and the current policy model’s output distributions at token $t$, respectively.
\begin{proof}
We derive the results as follows:
\begin{equation*}
\begin{split}
    \mathbb{E}_{k \sim \mathbf{p}'}\left[r(S_* - \mathbb{E}_{i \sim \mathbf{p}}[S_i])\right] &=\mathbb{E}_{k\sim \mathbf{p}'}\Biggl\{r\Bigl[p_k(H+\log p_k)-\sum_{i=1}^{V}p_i^{2}(H+\log p_i)\Bigr]\Biggr\}\\
    &=\sum_{k=1}^V\frac{p_k}{p'_k}p'_kp_k(H+\log p_k)-\sum_{i=1}^{V}p_i^{2}\bigl(H+\log p_i\bigr)\sum_{k=1}^V\frac{p_k}{p'_k}p_k'\\
    &=(1-\sum_{k=1}^Vp_k)\sum_{i=1}^{V}p_i^{2}\bigl(H+\log p_i\bigr)\\
    &=0.
\end{split}
\end{equation*}
\end{proof}

\textbf{Corollary 3.5.1.}\label{corollary2.1} \textit{For on-policy GRPO training with a batch, the expected value of entropy change factor $r(S_{*}^{t}-\mathbb{E}_{i\sim \mathbf{p}_{t}}[S_{i}^{t}])$ over the batch of tokens $\mathcal{T}_{\mathcal{B}}$ is zero:}
\begin{equation}
\mathbb{E}_{t\in\mathcal{T}_{\mathcal{B}}}[r(S_{*}^{t}-\mathbb{E}_{i\sim \mathbf{p}_{t}}[S_{i}^{t}])]=0.
\end{equation}

\begin{proof}
For each token in Batch $\mathcal{T_{B}}$, define $\mathbf{p}_{t}$ as the distribution of the current policy, and $\mathbf{p}'_{t}$ as the distribution of the sampling policy.
Considering one time step $t$, the selected token $K_t$ follows the distribution $\mathbf{p}'_{t}$, i.e., $K_t \sim \mathbf{p}'_{t}$.
The importance ratio of token $K_t$ can be expressed as $r = \frac{\mathbf{p}_{t}(K_t)}{\mathbf{p}'_{t}(K_t)}$.
The conditional expectation of each term under the sampling distribution $\mathbf{p}'_{t}$ is then given by:
\begin{equation*}
\mathbb{E}_{K_t \sim \mathbf{p}'_{t}} \left[ r \cdot (S_{*}^{t} - \mathbb{E}_{i\sim \mathbf{p}_{t}}[S_{i}^{t}]) \mid \mathbf{p}_t, \mathbf{p}'_t \right].
\end{equation*}
We expand this expression according to the definition of expectation:
\begin{align*}
\mathbb{E}_{K_t \sim \mathbf{p}'_{t}} \left[ r \cdot (S_{*}^{t} - \mathbb{E}_{i\sim \mathbf{p}_{t}}[S_{i}^{t}]) \mid \mathbf{p}_t, \mathbf{p}'_t \right]&= \sum_{k \in V} \mathbf{p}'_{t}(k) \cdot \frac{\mathbf{p}_{t}(k)}{\mathbf{p}'_{t}(k)} \cdot \left( S_{k}^{t} - \mathbb{E}_{i\sim \mathbf{p}_{t}}[S_{i}^{t}] \right) \\
&= \sum_{k \in V} \mathbf{p}_{t}(k) \cdot \left( S_{k}^{t} - \mathbb{E}_{i\sim \mathbf{p}_{t}}[S_{i}^{t}] \right) \\
&= \mathbb{E}_{k \sim \mathbf{p}_{t}} \left[ S_{k}^{t} - \mathbb{E}_{i\sim \mathbf{p}_{t}}[S_{i}^{t}] \right].
\end{align*}
According to Corollary~\ref{corollary2}, under the current policy distribution $\mathbf{p}_t$, the expectation of the difference between the discriminator score $S_*$ and its expectation is 0:
\begin{equation*}
\mathbb{E}_{k \sim \mathbf{p}_{t}} [S_{k}^{t}] - \mathbb{E}_{k \sim \mathbf{p}_{t}} [\mathbb{E}_{i\sim \mathbf{p}_{t}}[S_{i}^{t}]] = \mathbb{E}_{i\sim \mathbf{p}_{t}}[S_{i}^{t}] - \mathbb{E}_{i\sim \mathbf{p}_{t}}[S_{i}^{t}] = 0.
\end{equation*}
Therefore, for any token $K_t$ in the Batch, the expected conditioned value of the entropy change factor is 0:
\begin{equation}
    \mathbb{E}_{K_t \sim \mathbf{p}'_{t}} \left[ r \cdot (S_{*}^{t} - \mathbb{E}_{i\sim \mathbf{p}_{t}}[S_{i}^{t}]) \mid \mathbf{p}_t, \mathbf{p}'_t \right] = 0
\end{equation}

Finally, taking the mean over Batch $\mathcal{T_{B}}$ utilizing the linearity of expectation and tower property:
\[
\mathbb{E}_{t\in\mathcal{T_{B}}}[r(S_{*}^{t}-\mathbb{E}_{i\sim \mathbf{p}_{t}}[S_{i}^{t}])] = \mathbb{E}_{K_t\sim \mathbf{p}'_t}\Bigl[\frac{1}{|\mathcal{T_B}|}\sum_{t\in\mathcal{T_B}}r(S_t^{K_t}-\mathbb E_{i\sim \mathbf{p}_t}[S_t^i]\Big|\,\{\mathbf{p}_t\}_{t\in\mathcal{T}_{\mathcal B}})\Bigr]=\frac{1}{|\mathcal{T_{B}}|}\sum_{t\in\mathcal{T_{B}}} 0 = 0.
\]
\end{proof}

To extend the off-policy version of Corollaries~\ref{corollary4} and~\ref{corollary5}, we leverage similar methods in proving Corollaries~\hyperref[corollary1.1]{1.1}~\hyperref[corollary2.1]{2.1}, i.e., replacing the results in Corollaries~\ref{corollary2} and~\ref{corollary3} with their off-policy counterparts. The following corollaries show the off-policy extensions of Corollaries~\ref{corollary4} and~\ref{corollary5}.

\paragraph{Corollary C.1.1.}
\label{corollary3.1}
    \textit{During GRPO, the first-order expectation of token entropy change is given by:}
    \begin{equation}
        \mathbb{E}_{k\sim\mathbf{p'}}[\Delta H] = -\eta\,\mathrm{Cov}_{k\sim\mathbf{p'}}(A,r(S_*-E_{i\sim \mathbf{p}}[S_i])),
    \end{equation}
where $\mathbf{p}'$ and $\mathbf{p}$ denote the sampling policy’s and the current policy model’s output distributions at token $t$, respectively.
\paragraph{Corollary C.2.1.}
\label{corollary4.1}
 \textit{Within an GRPO training batch, the first-order expectation of entropy change is given by:}
    \begin{equation}
        \Delta H_{\mathcal{T_B}}=-\eta\,\mathrm{Cov}_{\mathcal{B}}(A,r(S_*-E_{i\sim \mathbf{p}}[S_i])).
    \end{equation}

\section {Supplemental results of the experiment}
\label{appendix:moreexp}
\subsection{Detailed training Curves}
The training dynamic of average@K accuracy and Entropy in Table~\ref{tab:performance} are provided in Figure~\ref{fig:curves}.
\begin{figure*}[htb]
    \centering
    \includegraphics[width=0.98\linewidth]{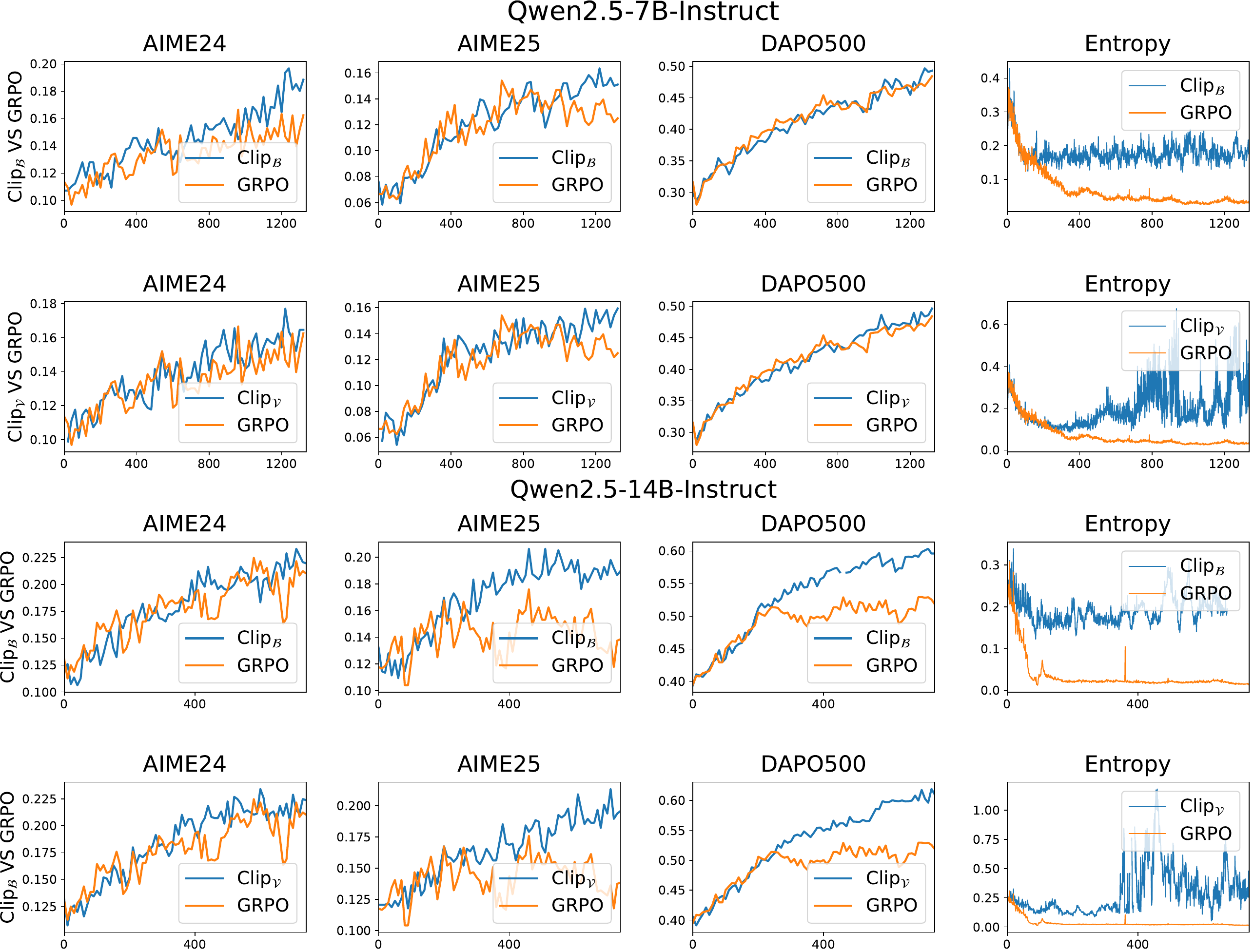}
    \caption{Full curves of performance and entropy for different models.}
    \label{fig:curves}
\end{figure*}
\subsection{Experiments with PPO}
We provide a simple demonstration with PPO on Qwen2.5-7B-Instruct model. We directly apply the GAE Advantage from PPO as the criterion for determining the token optimization direction in our algorithms, i.e.
$$
\delta_t = r_t+\gamma V_{t+1}-V_{t} \, , 
$$
$$
A_t = \delta_t+(\gamma\lambda)A_{t+1} \, , 
$$
where $V$ denotes the state value assigned by critic model, $\gamma$ denotes the discount factor and $\lambda$ represents smoothing parameter.
As a simple and direct application to PPO, our methods achieve significant improvements, as shown in table~\ref{tab:ppo}.
\begin{table}[htb]
    \centering
    \caption{Experiment results of $\mathrm{Clip}_\mathcal{B}$/$\mathrm{Clip}_\mathcal{V}$ with PPO training algorithm.}
    \begin{tabular}{lccc}
        \toprule
        Method & AIME24 & AIME25 & DAPO500\\
        \midrule
        Vanilla PPO &16.15&13.75&40.98  \\
        $\mathrm{Clip}_\mathcal{B}$ &16.56&\textbf{15.31}&\textbf{46.12} \\
        $\mathrm{Clip}_\mathcal{V}$ &\textbf{17.60}&15.21&44.50 \\
        \bottomrule
    \end{tabular}
    
    \label{tab:ppo}
\end{table}

We believe this result convincingly demonstrates the potential of our work to be applied across various policy gradient methods and highlights that developing entropy control methods tailored for different RFT algorithms based on the entropy dynamics is a promising direction for future work.
\subsection{Experiments with More Models}
\begin{figure}[tbp]
    \centering
    \begin{subfigure}[b]{0.24\linewidth}
        \centering
        \includegraphics[width=\linewidth]{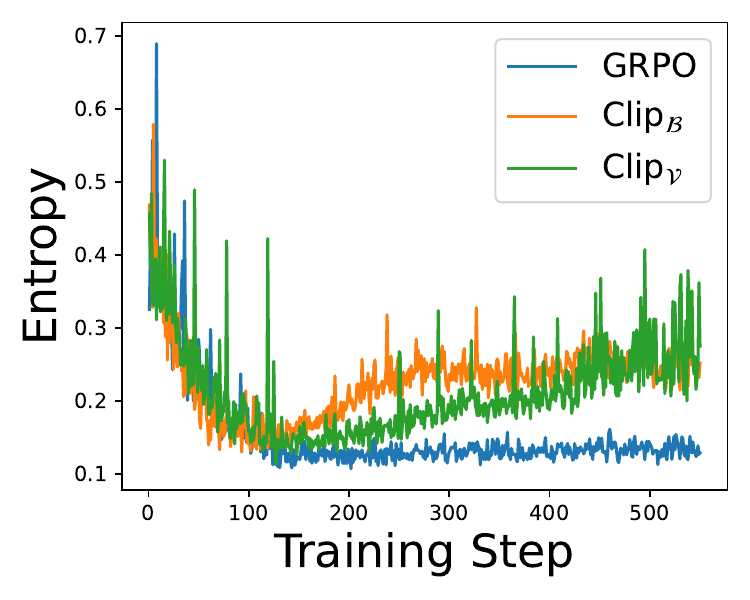}
        \caption{}
        \label{fig:qwen3_entropy}
    \end{subfigure}
    \begin{subfigure}[b]{0.24\linewidth}
        \centering
        \includegraphics[width=\linewidth]{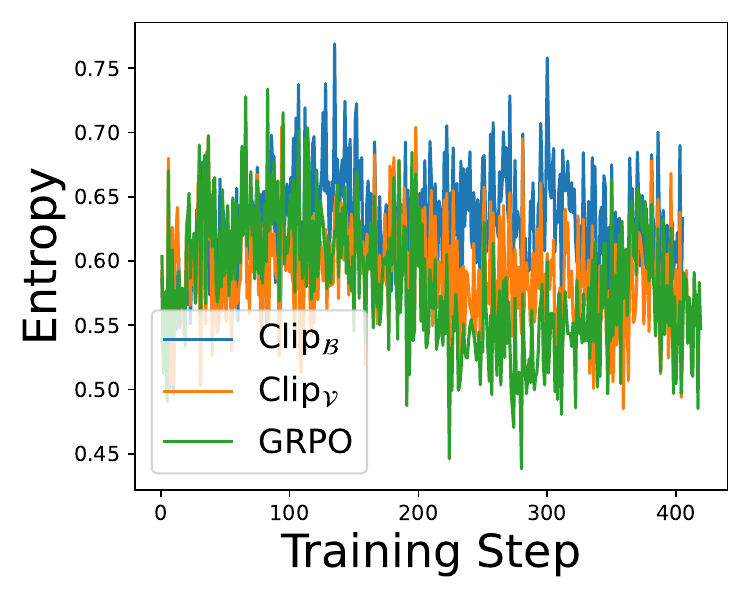}
        \caption{}
        \label{fig:dsllama_entropy}
    \end{subfigure}
    \begin{subfigure}[b]{0.24\linewidth}
        \centering
        \includegraphics[width=\linewidth]{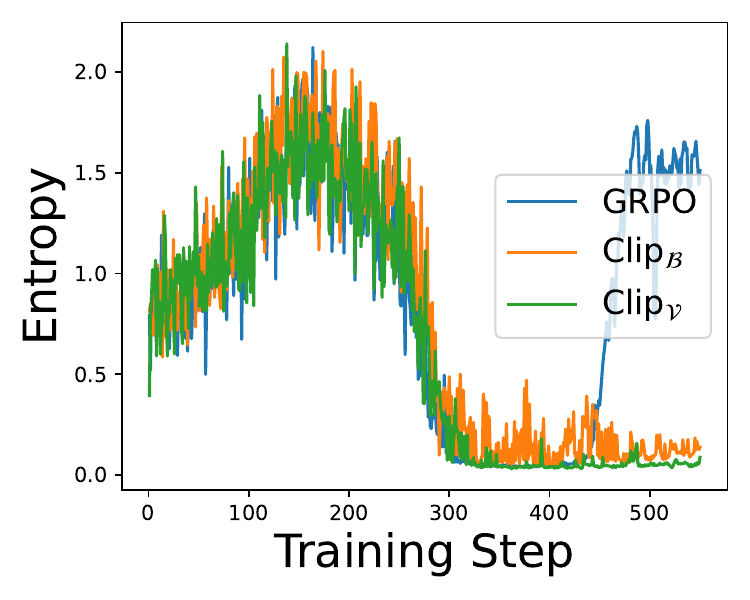}
        \caption{} %
        \label{fig:internlm_entropy}
    \end{subfigure}
    \hfill %
    \begin{subfigure}[b]{0.24\linewidth}
        \centering
        \includegraphics[width=\linewidth]{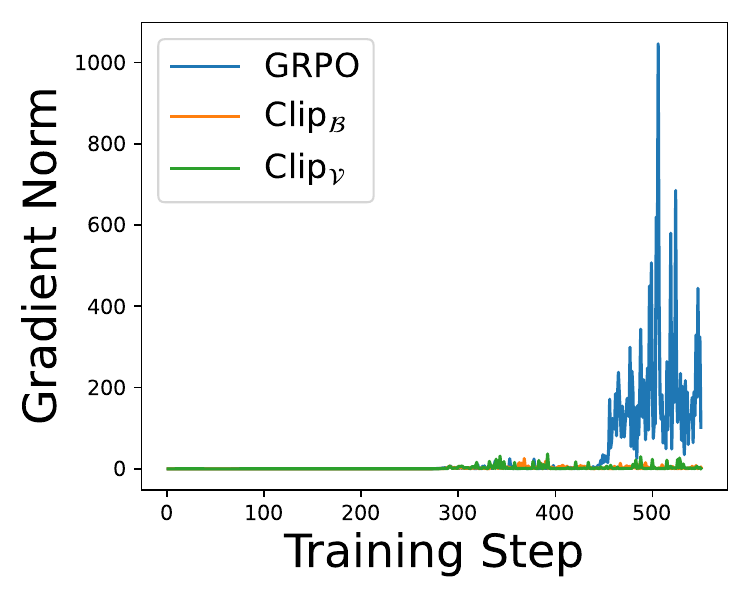}
        \caption{} %
        \label{fig:internlm_gradnorm}
    \end{subfigure}
    
    \caption{Dynamics of entropy during RFT of Qwen3(a), Distilled-llama(b), Internlm(c) and gradient norm of Internlm(d).}
    \label{fig:moremodel}
\end{figure}

\begin{table}[tbp]
    \centering    
    \caption{Avg@K accuracy of models trained from more base models.}
     \begin{tabular}{lccc}
         \toprule
         Method & AIME24 & AIME25 & DAPO500 \\
         \midrule
         Qwen3-4B-Base &9.06 &7.71 &27.82 \\
         \midrule
         GRPO & 20.73& 18.96 & 50.38 \\
         GRPO+$\mathrm{Clip}_\mathcal{B}$ & 20.83 & 19.48 & 49.70 \\
         GRPO+$\mathrm{Clip}_\mathcal{V}$ & \textbf{21.56} & \textbf{20.42} & \textbf{50.95} \\
         \midrule
         R1-Distill-llama-8B-Instruct & & & \\
         \midrule
         GRPO & 31.87 & 23.85 & 60.12 \\
         GRPO+$\mathrm{Clip}_\mathcal{B}$ & 32.19 & 24.37 & \textbf{60.42} \\
         GRPO+$\mathrm{Clip}_\mathcal{V}$ & \textbf{32.40} & \textbf{24.69} & 59.88 \\
         \midrule
         InternLM3-8B-Instruct &4.38 &3.43 &19.30 \\
         \midrule
         GRPO & 9.17 & 6.35 & 29.50 \\
         GRPO+$\mathrm{Clip}_\mathcal{B}$ & \textbf{9.27} & \textbf{7.29} & \textbf{30.88} \\
         GRPO+$\mathrm{Clip}_\mathcal{V}$ & 8.85 & 6.77 & 30.23 \\
         \bottomrule
    \end{tabular}
    \label{tab:moremodel}
\end{table}

We conduct additional experiments on Qwen3-4B-Base (hereafter mentioned as Qwen3), DeepSeek R1-Distill-llama-8B-Instruct (hereafter mentioned as Distilled-Llama), and InternLM3-8B-Instruct (hereafter mentioned as InternLM). The average@K performance of the models is listed in Table~\ref{tab:moremodel}, and the training dynamics are provided in Figure~\ref{fig:moremodel}.

As listed in Table~\ref{tab:moremodel}, our methods outperform baselines in most scenarios, demonstrating that their effectiveness in encouraging exploration and improving model performance can generalize across different models.

The training dynamics exhibited in Figure~\ref{fig:moremodel} vary across different models. For Qwen3, the training dynamics are similar to those of the Qwen2.5 series models. Our methods effectively alleviate the entropy collapse phenomenon. For Distilled-Llama, although the model's training dynamics differ significantly from those of Qwen, our method still demonstrates strong entropy-stabilizing properties and achieves competitive model performance.
In the training of InternLM, our method demonstrates useful benefits in stabilizing training. Despite employing additional data filtering and hyperparameter tuning, InternLM consistently suffers from training collapse when using Vanilla GRPO. In contrast, our method enables stable and sustained training. The corresponding training dynamics are shown in Figure~\ref{fig:internlm_gradnorm}: Vanilla GRPO exhibits significant gradient fluctuations in the later stages of training, whereas our method remains relatively stable. This suggests that our filtering of outlier tokens also contributes to training stability.

\end{document}